\documentclass[verbose=true,letterpaper]{article}
\PassOptionsToPackage{table}{xcolor}

\IfFileExists{neurips_2026.sty}{%
  \usepackage[preprint,nonatbib]{neurips_2026}%
}{\usepackage{review_layout}}            % preprint (e.g., arXiv)
\usepackage[utf8]{inputenc}
\usepackage[T1]{fontenc}
\usepackage{microtype}
\usepackage[toc,page]{appendix}
\usepackage[table]{xcolor}
\usepackage[
    hidelinks,
]{hyperref}

\usepackage[backend=biber, maxnames=1]{biblatex}
\usepackage{titlesec}
\usepackage{array}
\usepackage{amsmath}

\usepackage{amsfonts}
\usepackage{amssymb}
\usepackage{amsxtra}
\usepackage{amsthm}
\usepackage{mathrsfs}
\usepackage{mathtools}
\allowdisplaybreaks[1]
\usepackage{bm}
\usepackage{cancel}
\usepackage{yhmath}
\usepackage{nicefrac}

\usepackage{graphicx}
\usepackage{enumitem}
\usepackage{etoolbox}
\usepackage{needspace}
\usepackage{booktabs}
\usepackage{subcaption}
\usepackage{float}
\usepackage{tikz}
\usepackage{siunitx}
\usepackage{algorithm}
\usepackage{algpseudocode}

\usepackage{url}

\DeclareMathOperator*{\argmin}{arg\,min}

\newtheorem{assumption}{Assumption}

\newtheorem{corollary}{Corollary}
\newtheorem{definition}{Definition}
\newtheorem{remark}{Remark}
\newtheorem{theorem}{Theorem}

\usepackage{booktabs} % For professional table formatting
\usepackage{multirow} % For multi-row cells
\usepackage{adjustbox} % To adjust table width
\usepackage{array} % For custom column alignment
\usepackage{arydshln} % For dashed horizontal lines

\theoremstyle{plain}

\theoremstyle{definition}

\theoremstyle{remark}

\title{Deep Learning for Random Objects in Metric
Spaces: Beyond Manifold Assumptions}

\author{%
  Marcos Matabuena\thanks{Corresponding author.} \\
  Mohamed bin Zayed University of Artificial Intelligence
 \\
  \texttt{Marcos.Matabuena@mbzuai.ac.ae} \\
  \And
  Xiaokai Luo \thanks{MM and XL share first authorship.}
 \\
  University of Notre Dame \\
  \texttt{xluo4@nd.edu} \And
    Oscar Hernan Madrid Padilla \\
  Department of Statistics at University of California, Los Angeles \\
  \texttt{oscar.madrid@stat.ucla.edu} \\
}

\begin{document}
\maketitle

\smallskip
\begin{abstract}

Deep learning (DL) methods have substantially improved predictive performance across a wide range of downstream prediction tasks, enabling new discoveries in multiple domains. However, principled DL methods for non-Euclidean random objects remain relatively scarce, despite the growing importance of modern biomedical outcomes taking the form of probability distributions, random graphs, functions, and other  random objects in metric spaces. Motivated by a new generation of digital and clinical biomarkers,  we propose a deep learning framework for regression with random objects in metric spaces. Our approach operates directly on the metric structure of the response space, avoiding restrictive manifold or Euclidean embedding assumptions imposed by many existing methods. At the same time, it retains modern statistical guarantees  under the stated statistical assumptions. We establish statistical consistency and convergence rates for the proposed estimator, and characterize conditions under which it can mitigate the curse of dimensionality for hierarchically compositional regression functions. Extensive simulation studies demonstrate competitive predictive performance, with gains over several competing approaches in multiple Euclidean and non-Euclidean configurations. Finally, we demonstrate the practical utility of the proposed framework in two digital health case studies with distribution-valued outcomes in Wasserstein spaces, using high-resolution accelerometry and continuous glucose monitoring data. Embedding-based deep-learning methods may be less suitable when no informative low-dimensional representation exists or when the learned embedding does not adequately preserve the intrinsic geometry of the response space. \\
{Keywords:} metric spaces; deep learning; curse of dimensionality; digital health.
\end{abstract}
\noindent%
\vspace{-1.2em}
%{\it Keywords:}
%m
%\vfill
%\spacingset{1.8} % DON'T change the spacing!
\section{Introduction}

Technological advances in biomedicine are generating increasingly dense and nearly continuous measurements of human physiology, behavior, and environment \cite{crainiceanu2024functional}. Digital health data  offer new opportunities to characterize human health at an unprecedented resolution and to develop novel personalized medicine strategies \cite{matabuena2026randomeffects}. At the same time, they pose new analytical challenges that are not naturally addressed by conventional multivariate statistical methods. In many modern biomedical applications, the outcome of interest is not adequately represented by a scalar or a finite-dimensional vector, but instead takes the form of a random object belonging to a metric space \((\mathcal{Y},d)\). Examples include probability distributions, networks, trees, and other random objects for which the usual operations of addition and scalar multiplication may not be available \cite{petersen2019frechet}. Random objects in metric spaces offer a natural representation for emerging digital biomarkers because they preserve aspects of physiological and behavioral variation that are often lost when high-resolution data are reduced to a few conventional summary measures.

A prominent example arises in continuous glucose monitoring (CGM). Rather than characterizing an individual's glucose profile exclusively through summary measures such as mean glucose, glucose variability, or time spent within prespecified glycemic ranges, the entire collection of glucose measurements can be represented through a \emph{glucodensity}, that is, a probability distribution describing the relative amount of time an individual spends across the full range of observed glucose values \cite{matabuena2021glucodensities, matabuena2025glucodensity}. Similar distributional representations have been proposed for other high-resolution digital-health measurements, including accelerometer-derived physical activity, where the distribution of activity intensities provides an individualized representation without relying on prespecified physical activity thresholds \cite{matabuena2023accelerometer}.

Figure~\ref{fig:initial-participant-heatmaps} illustrates an extension of this perspective using CGM data from three individuals. Rather than considering only the overall distribution of glucose values during the monitoring period, glucose measurements are characterized according to the time of day at which they are observed, yielding a temporal distributional representation of the daily glucose profile. This formulation is particularly appealing for physiological data collected under free-living conditions, where individuals differ in meal timing, sleep patterns, physical activity, and other daily behaviors. Consequently, exact pointwise alignment of the observed trajectories may be neither biologically meaningful nor scientifically appropriate. The distributional representation instead provides a unified statistical framework for defining biomarkers that retain relevant information about human health while enabling meaningful comparisons across individuals \cite{ghosal2026survival}.

\begin{figure}[ht!]
    \centering
    \includegraphics[width=\textwidth]
    {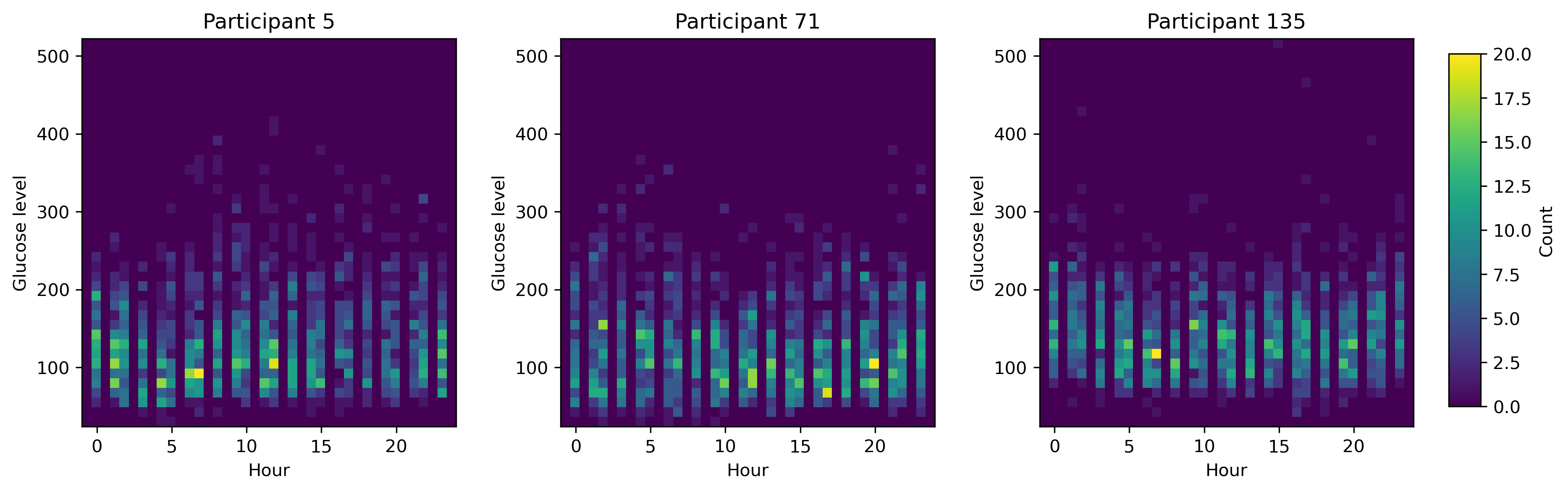}
    \caption{Bivariate temporal distributions derived from continuous
    glucose monitoring data for three participants. The horizontal axis
    represents time of day, and the vertical axis represents glucose
    concentration.}
    \label{fig:initial-participant-heatmaps}
\end{figure}

The increasing availability of distribution-valued and other non-Euclidean data has motivated the development of statistical methods for random objects in metric spaces \cite{frechet1948elements}. A fundamental framework is Fréchet regression, which extends the notion of conditional expectation by defining the regression target through a conditional Fréchet mean \cite{petersen2019frechet}. This perspective has since led to a broad range of methods for regression, dimension reduction, variable selection, longitudinal and multilevel data, and other statistical problems involving complex random objects. Nevertheless, most existing regression procedures are either designed for relatively low-dimensional predictors or rely on specific geometric or structural assumptions about the response space. Developing flexible regression methods that simultaneously accommodate general statistical responses and high-dimensional predictors therefore remains an important challenge in contemporary applications.

Deep learning (DL) \cite{lecun2015deep} is a natural candidate for addressing this problem, but DL methods for metric-space-valued responses remain comparatively underdeveloped. Several complementary deep-learning strategies have recently emerged. One approach uses an embedding-based construction. Deep Fréchet Regression first represents the relevant response structure in a low-dimensional Euclidean space, for example using ISOMAP, estimates the relationship between the predictors and the resulting coordinates using deep neural networks, and then maps predictions back to the original response space through local Fréchet regression \cite{iao2025deep}. This approach connects manifold learning and deep regression; its analysis relies on a learned low-dimensional representation followed by local Fr\'echet regression, so its performance depends on how well that representation captures the regression structure. Another recent strategy avoids explicitly embedding the responses into a Euclidean space and instead constructs predictions directly in the original metric space. The E2M framework of \cite{zhou2026e2m} employs a neural network to learn input-dependent weights on the probability simplex, with predictions obtained as weighted Fréchet means of the observed responses. A related approach is the deep Fréchet neural network (DFNN) framework of \cite{kim2026dfnn}, which combines learned representations of the predictors with a global Fréchet regression output layer and establishes universal approximation and generalization guarantees for metric-space-valued responses. Both frameworks provide end-to-end learning architectures that account for the geometry of the response space without requiring an explicit Euclidean embedding of the responses. However, their predictions are defined through weighted Fréchet optimization problems, making their implementation dependent on the geometric properties of the response space and the computational tractability of the corresponding optimization problems.

A separate line of work extends global Fr\'echet regression to nonlinear settings using operator-based methods. In particular, nonlinear global Fréchet regression based on weak conditional expectation uses reproducing-kernel Hilbert space embeddings and covariance operators to capture nonlinear relationships involving random objects \cite{bhattacharjee2025nonlinear}. This approach provides a flexible and theoretically wellgrounded
framework, although its implementation requires estimating and regularizing covariance
operators and related spectral quantities, which can be computationally and statistically demanding
in high-dimensional or non-Euclidean settings.

Despite these advances, developing statistically flexible and computationally tractable deep learning methods for metric-space-valued responses remains challenging. Existing approaches address the lack of linear structure in the response space through geometric embeddings, weighted Fréchet regression, or operator-based constructions. These strategies provide important theoretical and methodological foundations, but also motivate alternative formulations that can directly leverage the extensive methodological and theoretical developments in scalar-valued deep learning. In particular, modern deep learning theory has established favorable approximation and convergence guarantees for regression functions exhibiting hierarchical, compositional, or intrinsically low-dimensional structure, potentially mitigating the curse of dimensionality. This raises a fundamental question: can metric-space regression be reduced to conventional scalar-valued learning problems while transferring these statistical advantages to general non-Euclidean responses?

This challenge is especially interesting in view of recent developments in scalar nonparametric regression. Classical nonparametric methods are highly flexible, but their convergence rates typically deteriorate rapidly as the dimension of the predictors increases \cite{gyorfi2002distribution,tsybakov2009introduction}. Deep neural networks, however, can behave quite differently when the regression function has additional structure. A growing body of approximation and statistical theory shows that neural networks can achieve favorable rates for functions with hierarchical, compositional, sparse, or intrinsically low-dimensional structure \cite{yarotsky2017error,bauer2019deep,schmidthieber2020nonparametric}. In these settings, deep architectures can substantially alleviate the classical curse of dimensionality. This naturally raises the question of whether similar advantages can be retained when the response is a general random object rather than a scalar or Euclidean vector.

Motivated by these developments, we introduce a deep learning
framework for regression with responses taking values in metric
spaces. Our approach builds on the anchor-based regression
framework with random effects introduced by Matabuena and
C\'amara \cite{matabuena2026randomeffects}, extending its
underlying principles to modern deep learning. The proposed
framework is designed to retain the statistical flexibility
of deep neural networks while imposing minimal structure on
the response space. Rather than embedding the responses into
a low-dimensional manifold or Euclidean representation, we
reduce the original regression problem to a collection of
standard scalar-valued learning tasks and subsequently
reconstruct the target directly in the original metric space.
This formulation allows each component to be estimated using
conventional neural-network methods and, importantly, enables
the individual models to be trained independently and in
parallel.

Our contributions are threefold. First, we introduce an anchor-based deep learning framework for regression with metric-space-valued responses. The proposed approach transforms the original regression problem into a collection of scalar-valued learning tasks, thereby allowing the use of standard neural-network architectures without requiring an explicit Euclidean embedding of the responses. These learning tasks can be trained independently and in parallel, after which predictions are reconstructed in the original metric space. Second, we establish statistical consistency and convergence-rate guarantees for the proposed estimator under explicit assumptions on the response space, the regression structure, and anchor coverage. In particular, we investigate how hierarchical compositional structure in the underlying scalar regression functions influences the convergence behavior of the resulting metric-space estimator. Our analysis connects modern deep learning theory with anchor-based metric regression and identifies conditions under which favorable high-dimensional learning rates can be retained for non-Euclidean responses. Third, we investigate the finite-sample performance of the proposed framework through extensive simulation experiments and biomedical applications involving distribution-valued responses. We compare the proposed estimator with existing approaches, including Deep Fréchet Regression \cite{iao2025deep}, to evaluate its predictive accuracy and computational behavior across different regression settings.

\subsection{Notation}
For $n\in\mathbb{N}$, let $[n]=\{1,\ldots,n\}$. For $g:[0,1]^p\to\mathbb{R}$, define $\|g\|_\infty=\sup_{x\in[0,1]^p}|g(x)|.$ For $A>0$, let $g^A$ denote the clipping of $g$ at level $A$, $g^A(x)=\max\{-A,\min\{g(x),A\}\}.$

Given the design points $x_1,\ldots,x_n\in[0,1]^p$, define the normalized empirical norm
$\|g\|_n=\left(n^{-1}\sum_{i=1}^n g(x_i)^2\right)^{1/2}$,
omitting the dependence on the design points unless needed. The predictor dimension $p$ is fixed in the asymptotic statements. Network depths, widths, and hierarchy levels are positive integers; derivative orders may be zero.

For positive sequences $\{a_n\}$ and $\{b_n\}$, we write $a_n=O(b_n)$, or equivalently $a_n\lesssim b_n$, if there exist constants $C,N>0$ such that $a_n\leq Cb_n$ for all $n\geq N$. We write $a_n\asymp b_n$ if both $a_n=O(b_n)$ and $b_n=O(a_n)$. For random variables $\{X_n\}$, we write $X_n=o_{\mathbb{P}}(a_n)$ if $\frac{X_n}{a_n}\xrightarrow{\mathbb{P}}0,$ \noindent that is, for every $\varepsilon>0$, $\lim_{n\to\infty}\mathbb{P}\left(|X_n|>\varepsilon a_n\right)=0$.

\subsection{Outline}
This paper is organized as follows. Section~\ref{sec:met} introduces the methodology, and Section~\ref{theory} develops the statistical learning theory. Section~\ref{sec:simulation} reports the simulation study, while Section~\ref{sec:res} presents two digital-health case studies with multivariate distribution-valued outcomes. Section~\ref{sec:dis} concludes with limitations and future directions. The Supplementary Material contains the technical proofs and supporting results.

\section{Methodology}\label{sec:met}  
\subsection{Preliminaries}
 Let $(\mathcal{Y},d)$ be a separable metric space, and let $Y$ be a random object taking values in $\mathcal{Y}$. We assume that a regular conditional law of $Y$ given $X=x$ is fixed and that the risks below are jointly measurable and finite. Throughout the theoretical analysis, all minimizers are assumed to exist and to be found exactly. This includes the population targets, the network estimators, and the best-approximation functions. When a minimizer is not unique, a fixed measurable selection is used. Suppose that, together with $Y$, we observe a predictor vector $X \in \mathcal{X} \subset \mathbb{R}^p$. Throughout, we assume that the data consist of an i.i.d.\ sample $\mathcal{D}_n = \{(X_i,Y_i)\}_{i=1}^n \subset \mathcal{X}\times\mathcal{Y}$ from the joint distribution of $(X,Y)$. A fundamental notion for random objects in metric spaces is the Fr\'echet mean \cite{frechet1948elements}, which generalizes the classical expectation to non-Euclidean settings. The Fr\'echet mean is defined as
\begin{equation}
\mathbb{E}_{\oplus}[Y]
:=
\operatorname*{arg\,min}_{y\in\mathcal{Y}}
\mathbb{E}\bigl[d^2(y,Y)\bigr].
\end{equation}
Uniqueness requires additional conditions; for example, it holds in Hadamard spaces under a finite second-moment assumption \cite{sturm2003}. In the presence of covariates, the main statistical object of interest is the conditional Fr\'echet mean \cite{petersen2019frechet}, which generalizes the classical notion of conditional mean. It is defined by
\begin{equation}
m(x)
=
\mathbb{E}_{\oplus}(Y\mid X=x)
:=
\operatorname*{arg\,min}_{y\in\mathcal{Y}}
\mathbb{E}\bigl[d^2(y,Y)\mid X=x\bigr],
\qquad x\in\mathcal{X}.
\end{equation}
Here, the expectation is taken with respect to the conditional distribution of $Y$ given $X=x$. When $\mathcal{Y}=\mathbb{R}$ and $d$ is the Euclidean distance, the conditional Fr\'echet mean reduces to the classical conditional expectation. Regression with metric-space-valued responses can therefore be formulated as the problem of estimating the conditional Fr\'echet mean function $m(\cdot)$.

We next present two representative regression problems on metric spaces, both arising naturally in modern biomedical applications and playing an important role in contemporary statistical and machine learning methodologies.

\paragraph{Example 1 (Wasserstein distributions).}
Let \(\mathcal{X}=\mathbb{R}^{p}\), and let \(\mathcal{Y}=\mathcal{W}_{2}\) be the space of Borel probability measures on \(\mathbb{R}^{d}\) with finite second moments, equipped with the \(2\)-Wasserstein metric \(W_2\), whose square is
\[
W_2^2(\mu,\nu)
=
\inf_{\pi\in\Pi(\mu,\nu)}
\int_{\mathbb{R}^d\times\mathbb{R}^d}
\|u-v\|^2\,d\pi(u,v),
\]
\noindent where \(\Pi(\mu,\nu)\) is the set of couplings of \(\mu\) and \(\nu\). This metric compares distributions according to the least costly way of transporting mass, yielding a geometry-aware discrepancy that differs from classical divergence-based measures \cite{villani2003,panaretos2020invitation}.

\noindent In digital health, a common physiological representation is the distribution of a time series generated by a continuous process of interest such as glucose, heart rate, physical activity, or sleep over time. For subject \(i=1,\dots,n\), let \(\mathbf{Z}_i(t)\in\mathbb{R}^d\) be a digital-health time series observed at \(t_{i1}<\cdots<t_{im_i}\), e.g., continuous glucose, heart rate, or physical activity. The distribution-valued response is often the empirical marginal law
\[
\mu_i=\frac{1}{m_i}\sum_{j=1}^{m_i}\delta_{\mathbf{Z}_i(t_{ij})},
\]
\noindent or, to incorporate dynamics \cite{matabuena2025glucodensity}, the empirical joint law of the process and its discrete derivative,
\[
\nu_i=\frac{1}{m_i-1}\sum_{j=1}^{m_i-1}
\delta_{\left(\mathbf{Z}_i(t_{ij}),\,
\frac{\mathbf{Z}_i(t_{i,j+1})-\mathbf{Z}_i(t_{ij})}{t_{i,j+1}-t_{ij}}\right)}.
\]
\noindent With covariates \(X_i\in\mathbb{R}^{p}\), one models \(\mathbb{E}[\mu_i\mid X_i=x]\) or \(\mathbb{E}[\nu_i\mid X_i=x]\) in the Wasserstein space \(\mathcal{W}_2\); see \cite{matabuena2025predicting,matabuena2021glucodensities,matabuena2025glucodensity}. Thus the outcome is the full subject-specific distribution rather than a scalar summary, such as the mean biomarker level over the observation period.

\paragraph{Example 2 (Symmetric positive-definite matrices).}
Let \(\mathcal{X}=\mathbb{R}^{p}\) and \(\mathcal{Y}=\mathrm{Sym}_+^l\), the space of \(l\times l\) symmetric positive-definite matrices, which includes covariance and correlation matrices. Several natural metrics are available:
\begin{itemize}
\item Frobenius: \(d_F(A,B)=\|A-B\|_F\).
\item Log-Cholesky \cite{lin2019}: if \(A=L_A L_A^\top\), where \(L_A\) is lower triangular with positive diagonal, then
\[
d_{\mathrm{LC}}^2(A,B)=\|\log L_A-\log L_B\|_F^2.
\]
\item Power Euclidean \cite{dryden2009}: 
\[
d_\alpha(A,B)=\frac{1}{\alpha}\|A^\alpha-B^\alpha\|_F,\qquad \alpha>0,
\]
with the log-Euclidean limit \(\|\log A-\log B\|_F\) as \(\alpha\to0\).
\end{itemize}
See also \cite{pennec2006,bhatia2019} for affine-invariant and Bures--Wasserstein metrics. Such matrices arise naturally as covariance or correlation summaries of multivariate biomedical signals, for example, fMRI functional connectivity across brain regions, EEG channel covariance, or diffusion tensor imaging \cite{zalesky2012use}.
 This paper develops a deep-learning (DL) estimator of the conditional Fr\'echet mean that operates directly on metric-space-valued responses without requiring a prespecified Euclidean or manifold embedding.

\subsection{Deep ReLU neural networks}
We define the rectified linear unit (ReLU) activation function by $\tau(x) = \max\{0,x\}, \qquad x \in \mathbb{R}.$

Following \cite{kohler2021rate,padilla2024confidence}, we restrict attention to
multilayer feedforward ReLU networks in which all hidden layers have the same
width. Specifically, we let $\mathcal{F}(L,\nu)$ denote the class of dense ReLU
networks with $L$ hidden layers and $\nu$ hidden units in each layer.

Throughout, the term ``dense'' refers to the absence of sparsity restrictions
on the network parameters; equivalently, we consider fully connected network
architectures. An explicit definition of the functions belonging to
$\mathcal{F}(L,\nu)$ is provided in Appendix~\ref{sec:neural_network_class}.

\subsection{Estimator}

Before defining the estimator, define the conditional Fr\'echet risk function $M:\mathcal{X}\times\mathcal{Y}\to\mathbb{R}$ by

\[
M(x,y) := \mathbb{E}\bigl[d^2(Y,y)\mid X=x\bigr].
\]
\noindent The target of interest is the conditional Fr\'echet mean function $m : \mathcal{X} \to \mathcal{Y}$, which satisfies
\[
m(x) := \underset{y\in\mathcal{Y}}{\arg\min}\, M(x,y).
\]
\noindent Recalling that  $\mathcal{D}_{n}=\{(X_i,Y_i)\}_{i=1}^n$ are independent copies of $(X,Y)$, for each $i=1,\dots,n$, define the random function $M_i:\mathcal{X}\to\mathbb{R}$ by
\begin{equation}
\label{eqn:Mi}
M_i(x) := M(x,Y_i) = \mathbb{E}\bigl[d^2(Y,Y_i)\mid Y_i,  X=x\bigr].
\end{equation}
\noindent Thus, for each fixed $i$, the function $M_i$ describes the conditional expected squared distance between $Y$ and the observed response $Y_i$.

 An estimator of $M_i$ can be obtained by regressing $\{d^2(Y_j,Y_i):j\ne i\}$ on $\{X_j:j\ne i\}$. The observation indexed by $i$ is excluded so that, conditional on the anchor $Y_i$, the remaining regression pairs are independent. In \eqref{eqn:Mi}, $(X,Y)$ denotes a fresh copy independent of the training sample. Based on these estimators, we then construct an estimator of the conditional Fr\'echet mean function $m(\cdot)$. The procedure is defined as follows.

\begin{enumerate}
    \item {Estimation of the conditional risk functions.}

Let $I\subset\{1,\ldots,n\}$ be nonempty and deterministic. For each $i\in I$, define the exact empirical minimizer
\[
\widehat f_i\in\argmin_{f\in\mathcal F(L_i,\nu_i)}
\frac{1}{n-1}\sum_{j\in[n]\setminus\{i\}}
 \bigl(d^2(Y_j,Y_i)-f(X_j)\bigr)^2.
\]
The minimum is taken over the entire dense network class: no sparsity or weight constraints are imposed. Set
\[
\widehat M_i:=\widehat f_i^{A_n},\qquad A_n>0.
\]
Thus the clipping is applied after exact minimization.

    \item {Selection of the estimated Fr\'echet mean.}
    
    Having obtained $\{\widehat{M}_j\}_{j\in I}$, we predict a new covariate value $x$ by selecting the observed response whose estimated conditional risk is smallest. Specifically, we define
    \[
    \widehat{m}(x) := Y_i,
    \]
\noindent    where $i$ is chosen so that
    \[
 i = \min\!\left\{j\in I:\widehat M_j(x)=\min_{k\in I}\widehat M_k(x)\right\}.
    \]
\end{enumerate}

%\subsection{Computational Details}

 %We implemented the proposed algorithm in PyTorch. In the practical execution of the algorithm, we fix $I = \{1,\ldots,n\}$. All univariate regressions were performed using the same neural network architecture: a feedforward neural network with four hidden layers of width $20$. Each model was trained for $1000$ epochs using a learning rate of $0.05$ and a batch size of $40$. The hyperparameters were fixed across all experiments to ensure consistency and  fair comparison across different settings examined.

\section{Theory}
\label{theory}

We first establish risk bounds for arbitrary ReLU neural-network classes, expressed in terms of approximation error, and then specialize these bounds to the conditional risk functions $M_i$ under the hierarchical-composition assumptions of \cite{kohler2021rate}.

\subsection{General Theory for Estimation with ReLU Neural Networks}
Throughout we  assume that $\mathcal{X} = [0,1]^p$ for some $p\in \mathbb{N}$. We now state our main conditions on the metric $d$.

\begin{assumption}
    \label{as1}
The metric space $\mathcal{Y}$ is bounded: $d(a,b)\le C_n$ for all $a,b\in\mathcal Y$. The notation $C_n$ is retained to allow an $n$-dependent bound.
\end{assumption}

\begin{remark}
Assumption~\ref{as1} is satisfied whenever the metric space
$(\mathcal Y,d)$ has finite diameter,
\[
\operatorname{diam}(\mathcal Y)
:=
\sup_{a,b\in\mathcal Y}d(a,b)
<\infty.
\]
In this case, one may take
\[
C_n=\operatorname{diam}(\mathcal Y)
\]
for every $n$. The following are standard examples.

\begin{enumerate}

\item {Bounded Euclidean responses.}
Let
\[
\mathcal Y
\subseteq
\left\{
y\in\mathbb R^q:\|y\|_2\leq R
\right\},
\qquad
d(a,b):=\|a-b\|_2,
\]
for some $R<\infty$. Then, for every $a,b\in\mathcal Y$,
\[
d(a,b)
=
\|a-b\|_2
\leq
\|a\|_2+\|b\|_2
\leq
2R.
\]
Thus Assumption~\ref{as1} holds with $C_n=2R$. More generally, every
bounded subset of a finite-dimensional normed space satisfies
Assumption~\ref{as1} under the metric induced by the norm.

\item {Bounded functional or Hilbert-space-valued responses.}
Let $\mathbb H$ be a Hilbert space and let
\[
\mathcal Y
:=
\left\{
f\in\mathbb H:\|f\|_{\mathbb H}\leq R
\right\},
\qquad
d(f,g):=\|f-g\|_{\mathbb H}.
\]
Then
\[
d(f,g)
\leq
\|f\|_{\mathbb H}+\|g\|_{\mathbb H}
\leq
2R,
\]
so Assumption~\ref{as1} holds with $C_n=2R$. In particular, this
includes bounded subsets of $L^2(\mathcal T)$ equipped with the
$L^2$ distance.

\item {Probability measures under a Wasserstein metric.}
Let $(\mathcal S,\rho)$ be a bounded metric space with
\[
D:=\sup_{u,v\in\mathcal S}\rho(u,v)<\infty,
\]
and let $\mathcal Y=\mathcal P(\mathcal S)$ be a collection of
probability measures on $\mathcal S$. For $r\geq1$, equip
$\mathcal Y$ with the $r$-Wasserstein metric
\[
W_r(\mu,\nu)
:=
\left[
\inf_{\pi\in\Pi(\mu,\nu)}
\int_{\mathcal S\times\mathcal S}
\rho(u,v)^r\,\pi(du,dv)
\right]^{1/r},
\]
where $\Pi(\mu,\nu)$ denotes the set of couplings of $\mu$ and $\nu$.
Since $\rho(u,v)\leq D$ for all $u,v\in\mathcal S$,
\[
W_r(\mu,\nu)^r
\leq
D^r,
\]
and therefore
\[
W_r(\mu,\nu)\leq D.
\]
Thus Assumption~\ref{as1} holds with $C_n=D$. For example, probability
distributions supported on a bounded interval or a bounded subset of
$\mathbb R^q$ satisfy this condition under the Wasserstein metric.

\item {Probability distributions under the Hellinger metric.}
Let $\mathcal Y$ be a collection of probability measures dominated by
a common measure $\lambda$, and define
\[
d_{\mathrm H}(P,Q)
:=
\left[
\int
\left(
\sqrt{\frac{dP}{d\lambda}}
-
\sqrt{\frac{dQ}{d\lambda}}
\right)^2
\,d\lambda
\right]^{1/2}.
\]
Since
\[
d_{\mathrm H}(P,Q)^2
=
2-
2\int
\sqrt{\frac{dP}{d\lambda}
      \frac{dQ}{d\lambda}}
\,d\lambda
\leq2,
\]
we have
\[
d_{\mathrm H}(P,Q)\leq\sqrt{2}.
\]
Hence Assumption~\ref{as1} holds with $C_n=\sqrt{2}$, even when the
underlying sample space is unbounded.

\item {Finite and compact metric response spaces.}
Every finite metric space is bounded. Indeed, if
\[
\mathcal Y=\{y_1,\ldots,y_m\},
\]
then
\[
C_n
=
\max_{1\leq j,k\leq m}d(y_j,y_k)
<\infty.
\]
More generally, every compact metric space is bounded. For example,
the unit sphere
\[
\mathbb S^{q-1}
=
\{y\in\mathbb R^q:\|y\|_2=1\}
\]
has diameter $2$ under the Euclidean chordal metric and diameter
$\pi$ under the geodesic metric.

\end{enumerate}

In all of these examples, $C_n$ can be chosen to be a fixed constant
that does not depend on $n$. The notation $C_n$ also permits settings
in which the allowed response set grows with $n$; for example, if
$\mathcal Y$ is restricted to a ball of radius $R_n$, one may take
$C_n=2R_n$.
\end{remark}

\begin{remark}
    In digital-health applications, Assumption 1 is particularly natural, as physiological measurements are typically acquired within finite measurement ranges and may be restricted to bounded domains during preprocessing. Moreover, sensor acquisition and preprocessing often involve smoothing or filtering operations that improve the regularity of the observed signals. When the resulting distribution-valued responses are supported on a common bounded domain, their Wasserstein distances are uniformly bounded, and Assumption 1 is satisfied.
\end{remark}

For a measurable function $g:[0,1]^p\to\mathcal Y$, define
\[
\mathcal R(g):=\mathbb E[d^2(Y,g(X))]=\int M(x,g(x))\,dP_X(x).
\]
When $g$ is learned from $\mathcal D_n$, $\mathcal R(g)$ means the conditional prediction risk
$\mathbb E[d^2(Y,g(X))\mid\mathcal D_n]$ for a fresh independent pair $(X,Y)$.
Thus $\mathcal R(\widehat m)$ is random through the training sample, and
$\mathcal R(m)\le\mathcal R(g)$ holds for every realized measurable $g$.

\noindent We are now ready to state our main result which applies to arbitrary ReLU neural network function classes.

\begin{theorem}
\label{theo1}
Suppose Assumption~\ref{as1} holds. Let $L,\nu:\mathcal Y\to\mathbb N$ be deterministic measurable functions, allowed to depend on $n$, with $L(y)\nu(y)\ge2$. Set $L_i=L(Y_i)$ and $\nu_i=\nu(Y_i)$. For every $y\in\mathcal Y$, let
\[
\widetilde M(\cdot,y)\in\argmin_{f\in\mathcal F(L(y),\nu(y))}
\|f-M(\cdot,y)\|_\infty
\]
be an exact best approximant, and choose a positive sequence $\phi_n$ such that
\[
\sup_{y\in\mathcal Y}\|\widetilde M(\cdot,y)-M(\cdot,y)\|_\infty^2\le\phi_n.
\]
Choose the clipping level $A_n$ deterministically. Suppose
\begin{equation}
\label{eqn:an}
A_n\ge8\max\left\{1,\ \sup_y\|M(\cdot,y)\|_\infty+\mathcal U_n,
\ 8\sup_y\|M(\cdot,y)\|_\infty,\ 8\sqrt{\phi_n}\right\},
\end{equation}
where $\mathcal U_n=C_n^2$. Then
\[
\begin{split}
\mathcal R(\widehat m)\le\mathcal R(m)+r_n+c_1\Bigg[
&\sqrt{\frac{|I|\max\{\mathcal U_n^2,A_n^2\}
\max_{i\in I}\{L_i^2\nu_i^2\log(L_i\nu_i)\}\log(A_nn)}{n}}
+\sqrt{|I|\phi_n}\Bigg]
\end{split}
\]
with probability approaching one, for a constant $c_1>0$ independent of $n$, provided that $r_n\downarrow0$,
\[
\frac{\min_{i\in I}\{L_i^2\nu_i^2\log(L_i\nu_i)\}\log(2A_n^2)}{\log n}
\xrightarrow{\mathbb P}\infty,
\]
and
\begin{equation}
\label{cond2}
C_n\sup_{y\in\mathcal Y}\min_{i\in I}d(Y_i,y)=o_{\mathbb P}(r_n).
\end{equation}
 The probabilities are over $\mathcal D_n$. Constants may depend on the fixed predictor dimension $p$ and the stated uniform bounds.
\end{theorem}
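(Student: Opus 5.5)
The plan is to split the excess risk into an anchor-discretization term and an estimation term for the scalar regressions. Let $i^\ast(x)$ be the index chosen by $\widehat m$ at $x$, and let $j^\ast(x)\in I$ minimize $d(Y_j,m(x))$ over $j\in I$. By the selection rule, $\widehat M_{i^\ast(x)}(x)\le \widehat M_{j^\ast(x)}(x)$. Writing $M_i(x)=M(x,Y_i)$, we get
\[
M(x,\widehat m(x))\le M(x,Y_{j^\ast(x)})+2\max_{i\in I}|\widehat M_i(x)-M_i(x)|.
\]
Integrating against $P_X$ gives
\[
\mathcal R(\widehat m)\le \int M(x,Y_{j^\ast(x)})\,dP_X(x)+2\,\mathbb E_X\max_{i\in I}|\widehat M_i(X)-M_i(X)|.
\]
For the first term, the elementary identity $d^2(Y,a)-d^2(Y,b)=(d(Y,a)-d(Y,b))(d(Y,a)+d(Y,b))\le 2C_n\,d(a,b)$ applies. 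Taking $a=Y_{j^\ast(x)}$ and $b=m(x)$ yields
\[
M(x,Y_{j^\ast(x)})\le M(x,m(x))+2C_n\sup_y\min_{i\in I}d(Y_i,y).
\]
By \eqref{cond2}, this extra term is $\le r_n$ with probability tending to one. The factor $2$ can be absorbed, since $o_{\mathbb P}(r_n)$ lets us take any constant.

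For the estimation term, bound $\mathbb E_X\max_{i}|\cdot|$ by $\big(\sum_{i\in I}\|\widehat M_i-M_i\|_{L^2(P_X)}^2\big)^{1/2}$, using Cauchy–Schwarz and $\max\le$ sum of squares. This is why the rates appear multiplied by $\sqrt{|I|}$ under the square root. It therefore suffices to show that, uniformly over $i\in I$ and with probability tending to one,
\[
\|\widehat M_i-M_i\|_{L^2(P_X)}^2\lesssim \frac{\max\{\mathcal U_n^2,A_n^2\}\,L_i^2\nu_i^2\log(L_i\nu_i)\log(A_nn)}{n}+\phi_n .
\]
Summing over $I$ and taking a square root then gives the stated bound. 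For fixed $i$, condition on $(X_i,Y_i)$. The remaining $n-1$ pairs are i.i.d. with responses $d^2(Y_j,Y_i)\in[0,\mathcal U_n]$ and regression function $M_i$, and $L_i,\nu_i$ are deterministic given $Y_i$. The problem is thus a standard bounded-response least-squares regression over $\mathcal F(L_i,\nu_i)$ with truncation at $A_n$. I would invoke the Kohler–Langer type oracle inequality (as in \cite{kohler2021rate,padilla2024confidence}):
\[
\mathbb E\|\widehat f_i^{A_n}-M_i\|^2\lesssim \frac{A_n^2\log(\mathcal N)\,}{n}+\inf_{f}\|f-M_i\|_\infty^2,
\]
where $\log\mathcal N\lesssim L_i^2\nu_i^2\log(L_i\nu_i)\log(A_n n)$ is the covering number of truncated ReLU networks via VC/pseudodimension bounds (Bartlett et al.). The approximation term is at most $\phi_n$ by the choice of $\widetilde M$. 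The lower bounds on $A_n$ in \eqref{eqn:an} ensure that truncation does not hurt: $|M_i|\le A_n/8$, $|\widetilde M|\le A_n/8+\sqrt{\phi_n}$, and the response is within range.

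The main obstacle is making the per-anchor bound hold \emph{simultaneously} over all $i\in I$ with probability tending to one, rather than only in expectation for a single $i$. My first approach would be an exponential-tail version of the oracle inequality, i.e.\ a Bernstein-type concentration for the empirical process indexed by the truncated class (Györfi et al.\ Thm.~11.4 style). Each anchor's excess would then fail with probability at most $\exp(-c\,L_i^2\nu_i^2\log(L_i\nu_i)\log A_n^2)$, and a union bound over $|I|\le n$ would follow. The growth condition $\min_i L_i^2\nu_i^2\log(L_i\nu_i)\log(2A_n^2)/\log n\to\infty$ is exactly what makes $n\cdot\exp(-c\cdot\text{that})\to0$. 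Conditioning on $Y_i$ and then integrating out, together with leaving out the $i$th observation, keeps the independence needed for this argument.
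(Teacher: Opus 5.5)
Your argument is correct. Its skeleton is the paper's: comparing the selected anchor with the anchor nearest to $m(x)$ gives the same bound $\mathcal R(\widehat m)-\mathcal R(m)\le2\sqrt{|I|}\max_{i\in I}\|\widehat M_i-M_i\|_{\mathcal L_2}+2C_n\sup_y\min_{i\in I}d(Y_i,y)$ that the paper obtains from its $T_1+T_2$ split and $|\min_i a_i-\min_i b_i|\le\max_i|a_i-b_i|$, and the coverage step uses the same $2C_n$-Lipschitz bound.

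The real difference is the per-anchor concentration. The paper proves its own bounded-response oracle bound (Theorem~\ref{thm1_v2}) in four moves: a $1/(24n)$-net in $\|\cdot\|_n$, Hoeffding on the net elements, the basic inequality against $\widetilde M(\cdot,Y_i)$, and a separate empirical-to-population norm comparison taken from \cite{padilla2024confidence}. Because $\log n$ is built into $\delta_n^2\asymp\{\log n+\eta_n(1/(24n))\}/n$, each anchor fails with probability $O(n^{-3})$. The union bound over $|I|\le n$ is therefore free. The growth condition on $\min_i L_i^2\nu_i^2\log(L_i\nu_i)$ is essentially idle there: it is invoked only to verify dyadic entropy conditions that the bounded-response argument does not actually use.

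Your ratio-type uniform deviation inequality, in the style of Theorem~11.4 of \cite{gyorfi2002distribution}, bounds the population $L_2$ excess directly, so you avoid the norm-comparison step. Its failure probabilities are exponentially small in the network complexity, which gives the growth hypothesis a genuine job: beating the factor $|I|$.

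Two details need care when you write it out:
\begin{enumerate}
\item The growth condition holds only in probability, because $L_i,\nu_i$ depend on $Y_i$. Intersect with the event that the ratio exceeds a large constant $K$ before integrating the conditional failure probabilities $\mathbb P(E_i^c\mid Y_i)$ over $Y_i$.
\item Theorem~11.4 requires $B\ge1$ and its exponent scales like $B^{-4}$. Apply it to responses and class rescaled by $A_n$ (so $B=1$); otherwise you get $A_n^4$ where the statement has $A_n^2$.
\end{enumerate}

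Finally, you could drop the growth condition altogether by adding $3\log n$ to your deviation level. This costs nothing, since $L_i\nu_i\ge2$ gives $\log n\lesssim L_i^2\nu_i^2\log(L_i\nu_i)\log(A_nn)$, and it is precisely the paper's device.
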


\begin{remark}
\label{rem2}
When $A_n$ and $\mathcal U_n$ grow at most polylogarithmically, Theorem~\ref{theo1} gives risk consistency if the coverage condition holds and
\[
\frac{|I|\max\{\mathcal U_n^2,A_n^2\}
\max_{i\in I}\{L_i^2\nu_i^2\log(L_i\nu_i)\}\log(A_nn)}{n}=o_{\mathbb P}(1),
\qquad |I|\phi_n=o(1).
\]
The coverage rate depends on both the geometry of $\mathcal Y$ and the distribution of the anchors. 
\end{remark}

\begin{corollary}
    \label{cor1}
  Suppose that Assumption \ref{as1} holds. In addition, assume that the parameters $\{L_i\}_{i \in [n]}$ and $\{\nu_i\}_{i \in [n]}$ can be chosen such that the estimates $\widehat{M}_i$ satisfy  
   \begin{equation}
\label{eqn:cond3}
\underset{i\in I}{\max} \| \widehat{M}_i - M_i\|_{\infty} \,=\, O_{\mathbb{P}}(\alpha_n)
   \end{equation}
\noindent for a positive sequence $\alpha_n$ that converges to zero. Then, provided that \eqref{cond2} holds,
        \[
       \mathcal R(\widehat m)-\mathcal R(m)=O_{\mathbb P}(r_n+\alpha_n).
    \]
\end{corollary}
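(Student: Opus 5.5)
The plan is to decompose the excess risk through a near-optimal anchor, replacing the estimation-error part of Theorem~\ref{theo1} by the assumed uniform rate \eqref{eqn:cond3}. Fix a realization of $\mathcal D_n$ and let $x\in[0,1]^p$. Let $i^*(x)\in I$ be the selected index, so $\widehat m(x)=Y_{i^*(x)}$. Also pick $j(x)\in I$ measurably with $d(Y_{j(x)},m(x))\le \min_{i\in I}d(Y_i,m(x))+1/n$. Set $\delta_n:=\sup_{y}\min_{i\in I}d(Y_i,y)$. By the selection rule, $\widehat M_{i^*(x)}(x)\le \widehat M_{j(x)}(x)$. Hence
\[
M(x,\widehat m(x))=M_{i^*(x)}(x)\le \widehat M_{i^*(x)}(x)+\epsilon_n\le \widehat M_{j(x)}(x)+\epsilon_n\le M_{j(x)}(x)+2\epsilon_n,
\]
where $\epsilon_n:=\max_{i\in I}\|\widehat M_i-M_i\|_\infty=O_{\mathbb P}(\alpha_n)$.

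Next I compare $M_{j(x)}(x)=M(x,Y_{j(x)})$ with $M(x,m(x))$, using the Lipschitz-type bound on $y\mapsto M(x,y)$ from Assumption~\ref{as1}. For any $y,y'$, the triangle inequality gives
\[
|d^2(Y,y)-d^2(Y,y')|=|d(Y,y)-d(Y,y')|\,\bigl(d(Y,y)+d(Y,y')\bigr)\le 2C_n\,d(y,y').
\]
Taking conditional expectations yields $|M(x,y)-M(x,y')|\le 2C_n d(y,y')$. Therefore
\[
M_{j(x)}(x)\le M(x,m(x))+2C_n(\delta_n+1/n).
\]
Combining the two displays and integrating against $P_X$ gives
\[
\mathcal R(\widehat m)-\mathcal R(m)\le 2\epsilon_n+2C_n\delta_n+2C_n/n.
\]
This is legitimate because $(X,Y)$ is independent of $\mathcal D_n$, so $\mathcal R(\widehat m)=\int M(x,\widehat m(x))\,dP_X$.

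Finally I translate this into rates. By \eqref{eqn:cond3}, $2\epsilon_n=O_{\mathbb P}(\alpha_n)$, and by \eqref{cond2}, $C_n\delta_n=o_{\mathbb P}(r_n)$. The term $C_n/n$ is the one point needing care. I can avoid it by noting that $I$ is finite, so the minimum over $I$ is attained and $j(x)$ can be taken as an exact minimizer; no $1/n$ slack is then needed. Summing the bounds gives $O_{\mathbb P}(r_n+\alpha_n)$.

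The main obstacle is measurability of $x\mapsto \widehat m(x)$ and $j(x)$. It is handled by the finite index set together with the deterministic tie-breaking rule, since each $\widehat M_i$ is continuous. A second subtlety is that $M_{i^*(x)}$ is evaluated at a data-dependent anchor. This is harmless because $M_i(x)=M(x,Y_i)$ is defined pointwise given $Y_i$, and the bound $\epsilon_n$ is uniform in both $i$ and $x$.
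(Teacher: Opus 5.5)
Your proof is correct and is essentially the paper's argument: both bound $M(x,\widehat m(x))-M(x,m(x))$ pointwise by twice $\max_{i\in I}\|\widehat M_i-M_i\|_\infty$ plus the coverage term $2C_n\sup_{y}\min_{i\in I}d(Y_i,y)$, which comes from the $2C_n$-Lipschitz property of $y\mapsto M(x,y)$, and then integrate against $P_X$. The only cosmetic difference is that you compare the selected anchor directly with the anchor nearest to $m(x)$ using the selection rule, whereas the paper passes through $\min_{i\in I}M_i(x)$ and the inequality $|\min_i a_i-\min_i b_i|\le\max_i|a_i-b_i|$; both routes arrive at the same bound.
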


\subsection{Theory under Hierarchical Composition Models}

We now present our main result under explicit assumptions on the functions $M(\cdot,y)$ for $y \in \mathcal{Y}$. In particular, we impose conditions intended to mitigate the curse of dimensionality that arises when $X \in [0,1]^p$ is high-dimensional. To this end, we follow \cite{kohler2021rate}, who studied nonparametric regression with ReLU networks under a hierarchical composition class $\mathcal{H}(l,\mathcal{P})$ assumption. This class consists of functions formed through successive compositions of smooth component functions, each of which may depend on only a lower-dimensional subset of the $p$ input variables.  The relevant definitions are given in Appendix~\ref{sec:hierchical}.
%settings where these networks can effectively overcome the curse of dimensionality. The relevant definitions are given in Appendix~\ref{sec:hierchical}.

%We now state explicit assumptions on the functions $M(\cdot,y)$ for $y \in \mathcal{Y}$. In particular, we impose conditions designed to mitigate the curse of dimensionality that arises because $X \in [0,1]^p$ may be high-dimensional. To this end, we follow \cite{kohler2021rate}, who studied nonparametric regression with ReLU networks in settings where such networks can effectively overcome the curse of dimensionality. Before presenting our assumptions, we first recall some standard notation from \cite{kohler2021rate}.

We are now ready to state the main result of this subsection.

\begin{theorem}\label{theo2}

Suppose Assumption~\ref{as1} holds. For every $y\in\mathcal Y$, assume
$M(\cdot,y)\in\mathcal H(l(y),\mathcal P(y))$, with a chosen representation whose finite, nonempty set of component pairs is $\mathcal P(y)\subset[1,\infty)\times\mathbb N$.
Each component $g$ is $(t_g,C)$-smooth in the sense of Definition~\ref{definition: p,c smoothness}, where $t_g=q_g+s_g$, $q_g\ge0$, $s_g\in(0,1]$, and $(t_g,K_g)\in\mathcal P(y)$. The smoothness constant $C$, the bounds $c_2$ on all derivatives of orders at most $q_g$ (including order zero), and the Lipschitz constant $C_{\mathrm{Lip}}\ge1$ are uniform in $g$ and $y$. Assume
\[
\sup_y l(y)<\infty,\qquad
\sup_y\max_{(t,K)\in\mathcal P(y)}\max\{t,K\}<\infty.
\]
Let
\[
\phi_n:=\sup_{y\in\mathcal Y}\max_{(t,K)\in\mathcal P(y)}n^{-2t/(2t+K)}.
\]

Let $L(y)$ and $\nu(y)$ be as in \eqref{eqn:v2_choice1} or \eqref{eqn:v2_choice2}, with sufficiently large constants uniform in $y$, and assume these architecture maps are measurable. With $A_n$ satisfying \eqref{eqn:an} and $\mathcal U_n$ as in Theorem~\ref{theo1},

\[
\mathcal R(\widehat m)\le\mathcal R(m)+r_n+
 c_1\sqrt{|I|\max\{\mathcal U_n^2,A_n^2\}\phi_n
             (\log n)^3\log(A_nn)}
\]

with probability approaching one, provided $r_n\downarrow0$ and the coverage condition \eqref{cond2} holds. The constants may depend on the fixed $p$, the uniform hierarchy-depth and smoothness bounds, $c_2$, and $C_{\mathrm{Lip}}$, but not on $n$ or $y$.
\end{theorem}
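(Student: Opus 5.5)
Theorem~\ref{theo2} will follow by applying Theorem~\ref{theo1} with the architecture choices \eqref{eqn:v2_choice1} or \eqref{eqn:v2_choice2}. Two ingredients are needed. The first is a uniform approximation bound $\sup_y\|\widetilde M(\cdot,y)-M(\cdot,y)\|_\infty^2\lesssim\phi_n$ for the hierarchical class. The second is a bound on the complexity factor $\max_{i\in I}L_i^2\nu_i^2\log(L_i\nu_i)$ of order $n\phi_n(\log n)^3$, up to constants. Substituting both into the bound of Theorem~\ref{theo1} yields the stated rate.

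\textbf{Approximation step.} For each fixed $y$, I will invoke the approximation theorem of \cite{kohler2021rate} for $\mathcal H(l,\mathcal P)$. For a dense ReLU network with depth $L\asymp\log n$ and width $\nu\asymp\max_{(t,K)}n^{K/(2(2t+K))}$ (or with the roles of depth and width exchanged, as in the second choice), it gives
\[
\|\widetilde M(\cdot,y)-M(\cdot,y)\|_\infty\le c\max_{(t,K)\in\mathcal P(y)}n^{-t/(2t+K)}.
\]
The constant $c$ depends only on $p$, $l(y)$, the smoothness bounds, $c_2$ and $C_{\mathrm{Lip}}$. The approximation error propagates through the composition levels via the Lipschitz constant $C_{\mathrm{Lip}}$, which is uniform. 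Since $l(y)$, $t$ and $K$ are uniformly bounded, $c$ can be taken uniform in $y$, and squaring gives the bound by $\phi_n$, after enlarging the constants in the architecture if needed. Measurability of $L,\nu$ is assumed, and $L(y)\nu(y)\ge2$ holds for large $n$.

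\textbf{Complexity step.} With these choices,
\[
L_i^2\nu_i^2\log(L_i\nu_i)\lesssim(\log n)^2\,n^{K/(2t+K)}\log n=n\,n^{-2t/(2t+K)}(\log n)^3,
\]
maximized over $(t,K)\in\mathcal P(Y_i)$. Taking the supremum over $y$ bounds this by $n\phi_n(\log n)^3$. The first term of Theorem~\ref{theo1} is therefore at most
\[
\sqrt{|I|\max\{\mathcal U_n^2,A_n^2\}\phi_n(\log n)^3\log(A_nn)}.
\]
The second term satisfies $\sqrt{|I|\phi_n}\le$ the same quantity, because $A_n\ge8$ and $\log n\ge1$ for large $n$. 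The two terms can then be merged into a single constant $c_1$.

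\textbf{Side condition (main obstacle).} The remaining hypothesis of Theorem~\ref{theo1} is that $\min_i L_i^2\nu_i^2\log(L_i\nu_i)\log(2A_n^2)/\log n\to\infty$. Since $L_i\gtrsim\log n$ and $\nu_i\to\infty$ polynomially (each $K\ge1$ and $t$ is bounded), the product $L_i^2\nu_i^2$ grows faster than $(\log n)^2$, and $\log(2A_n^2)\ge\log 128$, so the ratio diverges deterministically. The step I expect to be most delicate is the uniformity of the Kohler--Langer approximation constants over $y$. That argument requires tracking how the constants depend on $(t_g,K_g,l)$, and these are assumed to range over a bounded set. I would check it by noting that the finitely many relevant parameters lie in a compact range, so the constants can be taken as maxima. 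The coverage condition \eqref{cond2} and $r_n\downarrow0$ are passed through unchanged.
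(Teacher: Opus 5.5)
Your proposal is correct and follows essentially the same route as the paper. The paper likewise invokes the Kohler--Langer construction with $M_{t,K}\asymp n^{1/[2(2t+K)]}$ to get $\sup_y\|\widetilde M(\cdot,y)-M(\cdot,y)\|_\infty^2\le\phi_n$, bounds $\max_{i\in I}L_i^2\nu_i^2\log(L_i\nu_i)$ by $n\phi_n(\log n)^3$, verifies the growth condition via $L_i\nu_i\gtrsim n^{c}$ with $c=1/[2(2\overline t+1)]$, and absorbs $\sqrt{|I|\phi_n}$ into the first term. The only slip is that in the side condition under \eqref{eqn:v2_choice2} it is $L_i$, not $\nu_i$, that grows polynomially, which gives the same conclusion.
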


When $A_n=O(\operatorname{poly}(\log n))$,  condition \eqref{eqn:an}, which imposes a lower bound on $A_n$ in terms of $\mathcal U_n$, $\sup_y\|M(\cdot,y)\|_\infty$, and $\sqrt{\phi_n}$, also implies that $\mathcal U_n$ grows at most polylogarithmically. In this case, Theorem~\ref{theo2} yields a remainder of order $r_n+\sqrt{|I|\phi_n}$ up to logarithmic factors. In particular, since $\log(A_nn)\asymp\log n$, the square-root term carries an additional factor $(\log n)^2$, together with the factor $\max\{\mathcal U_n,A_n\}$. The rate exponent in $\phi_n$ depends on the component dimensions rather than on the ambient predictor dimension. The resulting bound is therefore a general upper bound under the stated metric-space coverage conditions, rather than a claim of optimality. The corresponding scalar neural-network rate is given in \cite{kohler2021rate}.

\begin{remark}[Number of anchors]
Theorem 2 suggests that, for large sample sizes, selecting
a subset of anchors with cardinality
$|I|=k_n\asymp n^\alpha$ may be preferable to using all
observations. In particular, if
$\phi_n\asymp n^{-2t/(2t+K)}$ and the clipping and diameter
terms remain bounded, the estimation-error term vanishes
whenever
\[
0<\alpha<\frac{2t}{2t+K}.
\]
Consequently, the excess-risk bound converges to zero
provided that the anchor-coverage condition and the
remaining assumptions of Theorem 2 are satisfied.
\end{remark}

Our convergence results complement and extend recent theoretical
developments for deep learning with metric-space-valued responses.
Iao, Zhou, and M\"uller \cite{iao2025deep} derive convergence
rates under a low-dimensional manifold representation of the
conditional Fr\'echet mean, accounting for errors arising
from estimating the manifold coordinates.
Zhou, Iao, and M\"uller \cite{zhou2026e2m} establish universal
approximation and convergence properties for the
entropy-regularized E2M training objective.
Kim, Chen, and Dubey \cite{kim2026dfnn} establish universal
approximation, uniform consistency, and generalization
guarantees for DFNN, including excess-risk and prediction-error
bounds under neural-network complexity controls. Our work makes a distinct theoretical and methodological
contribution by establishing a direct connection between
deep learning theory for hierarchically compositional functions
and regression with general metric-space-valued responses.
Unlike existing approaches, our framework reduces the
non-Euclidean regression problem to independently estimable
scalar conditional squared-distance regression tasks,
without requiring a latent manifold representation or an
end-to-end Fr\'echet optimization layer.
More importantly, building on Kohler and Langer
\cite{kohler2021rate}, we establish explicit excess-risk
bounds that exploit the hierarchical compositional structure
of these scalar regression functions.
The resulting statistical estimation rates are governed
by the smoothness and effective dimensions of the
component functions rather than directly by the ambient
predictor dimension.

This result identifies a theoretically important class
of high-dimensional non-Euclidean regression problems
for which deep learning can mitigate the classical
curse of dimensionality.
In contrast to general approximation and neural-network
complexity guarantees, our analysis makes explicit how
favorable compositional structure translates into
statistical learning guarantees for metric-space-valued
responses.
Crucially, the theory also characterizes the role of
the number and coverage of response anchors, revealing
the trade-off between scalar estimation accuracy and
approximation of the conditional Fr\'echet target.
Thus, our framework provides not only an alternative
computational strategy, but also a distinct theoretical
perspective on when and why deep learning can be
statistically effective for non-Euclidean outcomes.

\section{Simulation Study}
\label{sec:simulation}

We assess the finite-sample performance of the proposed method through simulation experiments. Our main goal is to evaluate how accurately it estimates the conditional Fr\'echet mean and how it compares with competing approaches for regression with metric-space-valued responses. The competing methods are deep Fr\'echet regression (DFR), global Fr\'echet regression (GFR), \(k\)-nearest-neighbor regression (KNN), and random forests (RF).

We consider two response types: (i) Euclidean-valued outcomes and (ii) probability distributions in the \(2\)-Wasserstein space. In all scenarios, the predictor vector is \(X=(X_1,\ldots,X_p)^\top\in\mathbb{R}^p\), with \(p\in\{5,10,15,20\}\). The training sample size is varied over \(n\in\{200,300,400,500\}\), and all fitted models are evaluated on an independent test set of size \(n_{\mathrm{test}}=2000\). The prediction error is defined as
\[
\frac{1}{n_{\mathrm{test}}}
\sum_{i\in\mathcal{D}_{\mathrm{test}}}
d^2\!\left(
Y_i,\widehat m(X_i)
\right),
\]
where \(Y_i\) is the test response corresponding to \(X_i\) and \(\widehat m(\cdot)\) is the fitted estimator. We use Euclidean distance for Euclidean responses and \(2\)-Wasserstein distance for distribution-valued responses. Each experiment is repeated over \(100\) independent Monte Carlo replications.

%The goal of this section is to illustrate the versatility of our deep-learning approach for predicting multivariate distributions. In the univariate case, \(\mathcal{W}_2(\mathbb{R})\) admits a convenient linear representation via quantile functions and embeds isometrically into \(L^2([0,1])\) \cite{villani2009optimal,petersen2019frechet}. Multivariate distributions, by contrast, do not admit an analogous Hilbert-space embedding that would allow linear methods to be applied directly \cite{villani2009optimal,ambrosio2008gradient}. As a result, competing deep-learning approaches for metric-space-valued outcomes that rely on ISOMAP \cite{tenenbaum2000global} may yield inaccurate estimates. ISOMAP estimates geodesic distances through shortest paths in a neighborhood graph. For multivariate distributions, however, the Wasserstein geometry is not locally Euclidean in the same sense as in the one-dimensional case, and reliable graph construction requires a large, well-spread sample---an assumption rarely satisfied in clinical datasets.

For all neural-network-based methods, we use three hidden layers with \(50\) ReLU units per layer. Models are trained for \(1000\) epochs with Adam, learning rate \(0.005\), and mini-batch size \(32\). The same architecture and optimization settings are used for all neural-network competitors.

\subsection{Euclidean Responses}

\subsubsection{Scenario 1: Mixed nonlinear regression}

We first consider a nonlinear regression setting in which both the number of active predictors and the complexity of the regression function increase with the total number of predictors \(p\). Predictors are generated independently as
\[
X_j\overset{\mathrm{iid}}{\sim}\operatorname{Unif}(-2,2),
\qquad
j=1,\ldots,p.
\]
The response is
\[
Y=m(X)+\varepsilon,
\qquad
\varepsilon\sim\mathcal{N}(0,0.1^2),
\]
with
\begin{align}
m(x)
={}&
\sin(x_1x_2)
+0.5\cos(x_3)
+0.25x_4^2
\notag\\
&+
\mathbb{I}(p\geq15)
\left(
x_{11}+x_{12}+x_{13}+x_{14}x_{15}
\right)
\notag\\
&+
\mathbb{I}(p\geq20)
\left(
x_{16}+x_{17}+x_{18}+x_{19}x_{20}
\right).
\label{eq:scenario1}
\end{align}

For \(p=5\) and \(p=10\), the response depends only on the first four predictors. For \(p=15\) and \(p=20\), we introduce additional active variables and interaction terms. This design allows us to study how the methods behave as both the ambient dimension and the intrinsic complexity of the problem increase.

Figure~\ref{fig:scenario1} shows that the proposed DL method improves substantially as the training sample size increases. For \(p=5\) and \(p=10\), KNN and RF are competitive when \(n\) is small, whereas DL becomes comparable or better as more training data become available. For \(p=15\) and \(p=20\), linear regression performs well in the smallest-sample settings, but DL eventually achieves the lowest prediction error. The relative performance of DL improves in the higher-dimensional configurations, where the regression function contains more active predictors and nonlinear interactions.

\begin{figure}[H]
\centering
\includegraphics[width=\textwidth]
{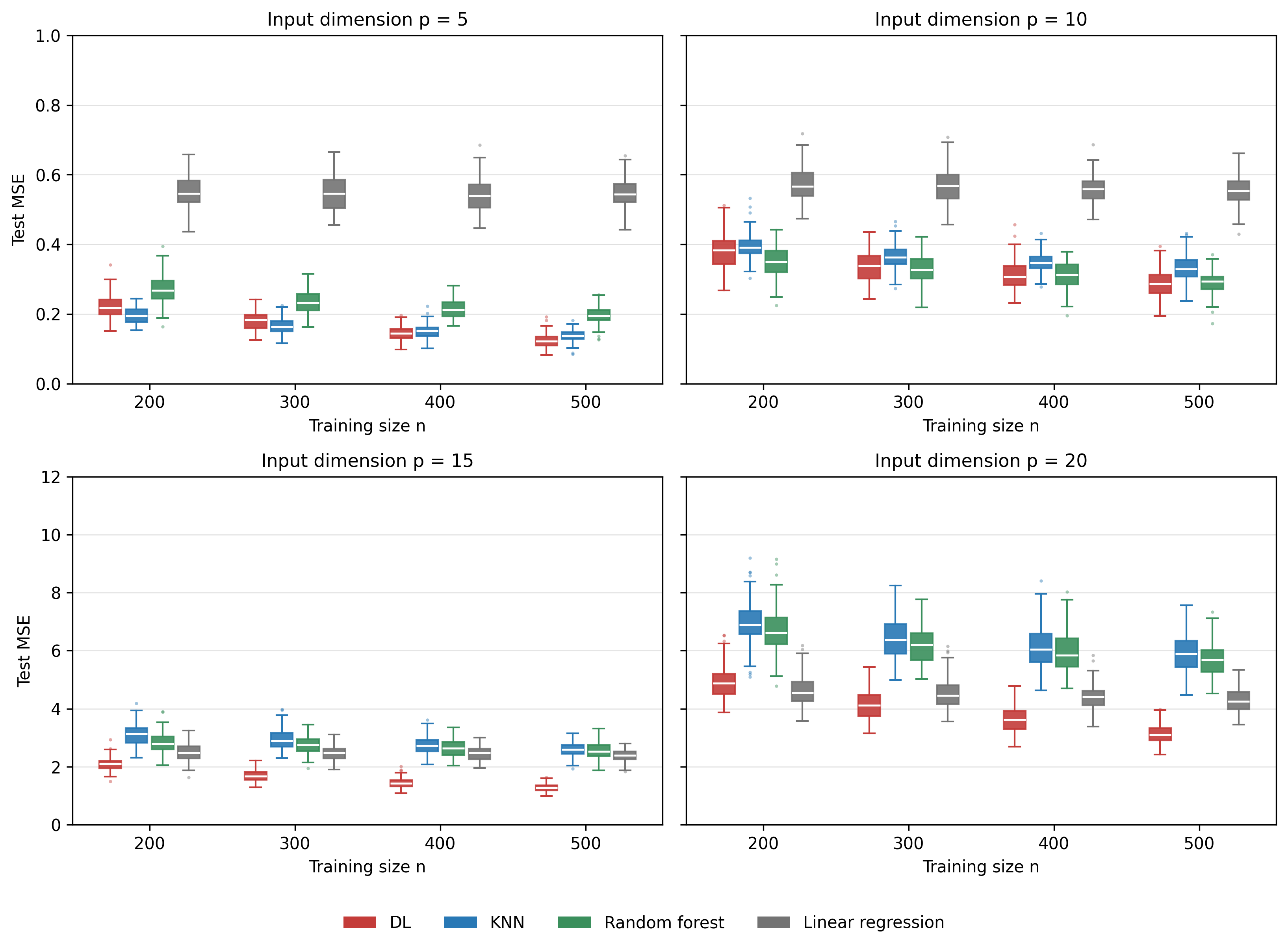}
\caption{Boxplots of prediction error for the competing methods in Scenario~1. Each panel corresponds to a different predictor dimension \(p\).}
\label{fig:scenario1}
\end{figure}

\subsubsection{Scenario 2: Fixed neural-network regression}

In the second scenario, the regression function is generated by a fixed
teacher neural network. The predictors are independently generated as
\[
X_j \overset{\mathrm{iid}}{\sim} \operatorname{Unif}(-2,2),
\qquad j=1,\ldots,p,
\]
and the response is
\[
Y=m(X)+\varepsilon,
\qquad
\varepsilon\sim\mathcal{N}(0,0.1^2).
\]

Let \(r=\min(p,8)\) and define
\[
\widetilde X=(X_1,\ldots,X_r)^\top.
\]
Thus, only the first \(r\) predictors affect the response. The teacher network
has three hidden layers with widths \(32\), \(24\), and \(16\):
\begin{align}
h_1(x)
&=
\tanh\!\left(W_1^\top\widetilde x+b_1\right),\\
h_2(x)
&=
\operatorname{ReLU}\!\left(W_2^\top h_1(x)+b_2\right),\\
h_3(x)
&=
\tanh\!\left(W_3^\top h_2(x)+b_3\right).
\end{align}

{
Here,
\[
W_1\in\mathbb{R}^{r\times32},\qquad
W_2\in\mathbb{R}^{32\times24},\qquad
W_3\in\mathbb{R}^{24\times16}.
\]

Writing
\[
(d_0,d_1,d_2,d_3)=(r,32,24,16),
\]
the hidden-layer weights are generated independently according to
\[
(W_\ell)_{ab}
\sim
\mathcal{N}\!\left(
0,\frac{2}{d_{\ell-1}+d_\ell}
\right),
\qquad \ell=1,2,3,
\]
and the corresponding bias components are independently generated from
\[
(b_\ell)_a\sim\mathcal{N}(0,0.2^2).
\]
The initial output weights and bias satisfy
\[
(\widetilde w_4)_a\sim\mathcal{N}\!\left(0,\frac{1}{16}\right),
\qquad
\widetilde b_4\sim\mathcal{N}(0,0.1^2),
\]
and define the uncalibrated signal
\[
\widetilde m(x)
=
\widetilde w_4^\top h_3(x)+\widetilde b_4.
\]

To standardize the signal strength, we generate \(20{,}000\) calibration
vectors independently from \(\operatorname{Unif}([-2,2]^r)\). Let
\(\overline m\) and \(s_m\) denote the empirical mean and standard deviation
of \(\widetilde m\) over this calibration sample. The final regression
function is
\[
m(x)
=
\frac{\widetilde m(x)-\overline m}{s_m}.
\]
}
All network parameters are generated once and held fixed across Monte Carlo replications, ensuring that every replication uses the same underlying regression function. The network output is standardized to have approximately zero mean and unit variance under the predictor distribution.

When $p>8$, only the first eight predictors affect the response; the remaining variables are noise predictors. Unlike Scenario~1, increasing $p$ therefore increases ambient dimension without increasing the intrinsic dimension of the regression function.

Figure~\ref{fig:scenario2} shows that DL performs particularly well for $p=5$ and becomes increasingly competitive for $p=10$, $15$, and $20$ as training sample size grows. Its prediction error decreases markedly with $n$, indicating that the proposed method recovers the nonlinear structure of the teacher network once enough training data are available.

\begin{figure}[H]
\centering
\includegraphics[width=\textwidth]
{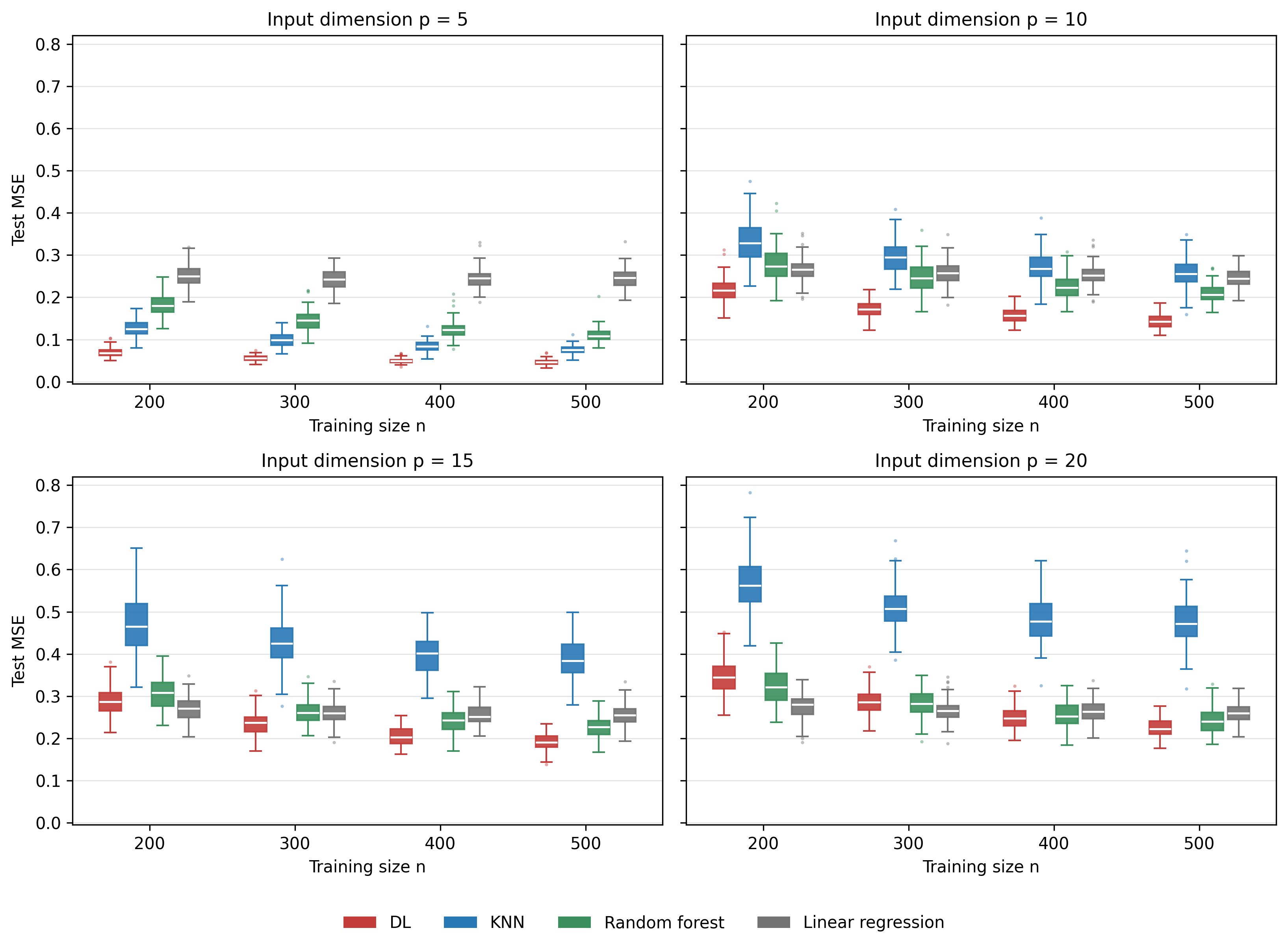}
\caption{Boxplots of prediction error for the competing methods in Scenario~2. Each panel corresponds to a different predictor dimension \(p\).}
\label{fig:scenario2}
\end{figure}

\subsection{Distribution-Valued Responses}

We next consider regression settings in which the response is a probability distribution belonging to the Wasserstein space \(\mathcal{W}_2(\mathbb{R}^q)\), where \(q\) denotes the dimension of the underlying Euclidean space on which the distributions are defined.

\subsubsection{Scenario 3: Covariate-dependent Gaussian distributions}

In the third simulation, the response is a probability distribution on \(\mathbb{R}^q\). For each training observation, we generate
\[
X_i \sim \operatorname{Uniform}([0,1]^p),
\]
where \(p\) is the predictor dimension. Conditional on \(X_i=x\), the response is a multivariate Gaussian
\[
P_i = \mathcal{N}\!\left(\mu(x), Q_\Sigma(x)\right),
\]
with
\[
\mu(x)=\left(\sum_{j=1}^{p} x_j,\sum_{j=1}^{p} x_j^2,\ldots,\sum_{j=1}^{p} x_j^q\right)^\top\in\mathbb{R}^q,
\]
\[
Q_\Sigma(x)=\Sigma(x)+U_i^2\sigma^2 I_q,
\qquad
\Sigma(x)=v(x)v(x)^\top+0.5I_q,
\qquad
v(x)=Wx.
\]

Here, \(W\in\mathbb{R}^{q\times p}\) has entries independently
generated from \(\mathcal{N}(0,0.5^2)\) and is held fixed
across all Monte Carlo replications.
Moreover, \(U_i\sim\operatorname{Uniform}(-1,1)\), and
\(\sigma>0\) controls the isotropic covariance perturbation.
We set \(\sigma=0.05\) throughout this simulation scenario.
The term \(0.5I_q\) ensures positive definiteness,
while the rank-one component allows the covariance to
vary smoothly with the predictors.

Each \(P_i\) is approximated by \(s_1=200\)
independent draws
\[
Y_{i1},\ldots,Y_{is_1}
\overset{\mathrm{iid}}{\sim}P_i,
\]
yielding the empirical distribution
\[
\widehat P_i
=\frac{1}{s_1}\sum_{\ell=1}^{s_1}\delta_{Y_{i\ell}},
\]
where \(\delta_y\) denotes the Dirac measure at \(y\).

For the test set, we generate
\(X\sim\operatorname{Uniform}([0,1]^p)\)
and define the Gaussian reference distribution
\[
P_X^{\mathrm{ref}}
=
\mathcal{N}\left(
\mu(X),\Sigma(X)+\frac{\sigma^2}{3}I_q
\right).
\]

\noindent This reference distribution is obtained by averaging
the conditional covariance matrices over the random
perturbation \(U\), using \(E[U^2]=1/3\).
It provides a deterministic benchmark for evaluating
the recovery of the underlying covariate-dependent
distributional structure. Although this reference
distribution does not necessarily coincide with the
conditional Fr\'echet mean under the 2-Wasserstein
metric, the distinction is expected to be negligible
in the present setting because the covariance
perturbation is small (\(\sigma=0.05\)).

Thus this scenario provides a nonlinear distributional regression benchmark in which both the location and dispersion of the response vary with the predictors. It assesses whether the competing methods recover smooth covariate-dependent structure in Wasserstein space while accounting for observation-level covariance variability.

Figure~\ref{fig:scenario3q1} reports the results for \(q=1\). DL achieves the lowest average \(W_2\) prediction error in 14 of the 20 configurations. It performs best across all sample sizes when \(p=5\), while for larger \(p\) its relative performance improves steadily with \(n\). DFR performs particularly well in higher-dimensional settings when the training sample size is small. KNN and GFR generally lie between these methods, while RF tends to produce the largest errors. Overall, DL benefits strongly from larger training samples and becomes particularly competitive once enough observations are available.

\subsubsection{Scenario 4: Product-interaction Gaussian distributions}

In the fourth simulation, the response is a probability distribution on $\mathbb{R}^q$, empirically represented by $s_1$ sampled points. For each training observation, the covariates are generated through a fixed feed-forward neural network. We first draw
\[
Z_i \sim \mathcal{N}(0,I_{d_z}),
\]
and then set
\[
X_i = c_X\{f_\theta(Z_i)+\varepsilon_i\},
\qquad
\varepsilon_i \sim \mathcal{N}(0,\sigma_X^2 I_p),
\]
where $f_\theta:\mathbb{R}^{d_z}\to\mathbb{R}^p$ is a feed-forward neural network with $\tanh$ hidden activations and a linear output layer. The parameters $\theta$ are generated once and held fixed within each Monte Carlo replication, so training and test covariates share the same nonlinear transformation.

The mean vector is
\[
\mu(X_i)
=
\bigl(\pi(X_i),\pi(X_i)^2,\ldots,\pi(X_i)^q\bigr)^\top,
\qquad
\pi(X_i)=\prod_{j=1}^{r} X_{ij},
\]
where $r\le p$ is the number of active covariates, with default $r=3$. Thus the mean depends on a multiplicative interaction among the first few predictors.

The covariance is
\[
\Sigma(X_i)=v(X_i)v(X_i)^\top+0.5\,I_q,
\qquad
v(X_i)=WX_i,
\]
with $W\in\mathbb{R}^{q\times p}$ fixed across the experiment. 

Within each Monte Carlo replication, \(W\) is generated once and shared by
the training and test observations. Its entries are independently generated as
\[
W_{kj}\sim\mathcal{N}(0,0.5^2).
\]

Both \(W\) and the covariate-generating neural network are regenerated across
Monte Carlo replications using replication-specific random seeds.

To introduce additional heterogeneity, define
\[
Q_{\Sigma}(X_i)=\Sigma(X_i)+U_i^2\sigma^2 I_q,
\qquad
U_i\sim\operatorname{Uniform}([-1,1]),
\]
where $\sigma>0$ controls the perturbation level. { We
set \(\sigma=0.05\) throughout this simulation scenario.} The response distribution is then
\[
P_i=\mathcal{N}\bigl(\mu(X_i),Q_{\Sigma}(X_i)\bigr),
\]
and $P_i$ is empirically represented by  { $s_1=200$ independent samples.} 

%For the test set, covariates are generated using the same neural-network parameters and covariance map \(W\) as the training set within that replication, but with independent latent variables, covariate noise, and covariance perturbations. Both the covariate-generating network and \(W\) are regenerated across Monte Carlo replications using replication-specific random seeds.

This scenario is a challenging distributional benchmark: the conditional distribution varies through both mean and covariance, and the mean contains a multiplicative interaction. It therefore tests whether each method captures interaction-driven distributional effects in Wasserstein space.

Figures~\ref{fig:scenario4q2} and~\ref{fig:scenario4q4} compare the methods for response dimensions $q=2$ and $q=4$. For $q=2$, KNN performs best when $p=5$ and $p=10$, while DL becomes increasingly competitive as $p$ grows and achieves the lowest mean error for all sample sizes when $p=20$. For $q=4$, KNN has the lowest mean error in 14 of the 20 configurations, while DL performs best in the remaining six and is particularly competitive for $p=20$. Overall, KNN is the most stable method, whereas the  performance of DL improves in the higher-dimensional settings considered. DFR and GFR generally exhibit intermediate performance, while RF produces the largest errors and variability.

\begin{figure}[H]
\centering
\includegraphics[width=\textwidth]
{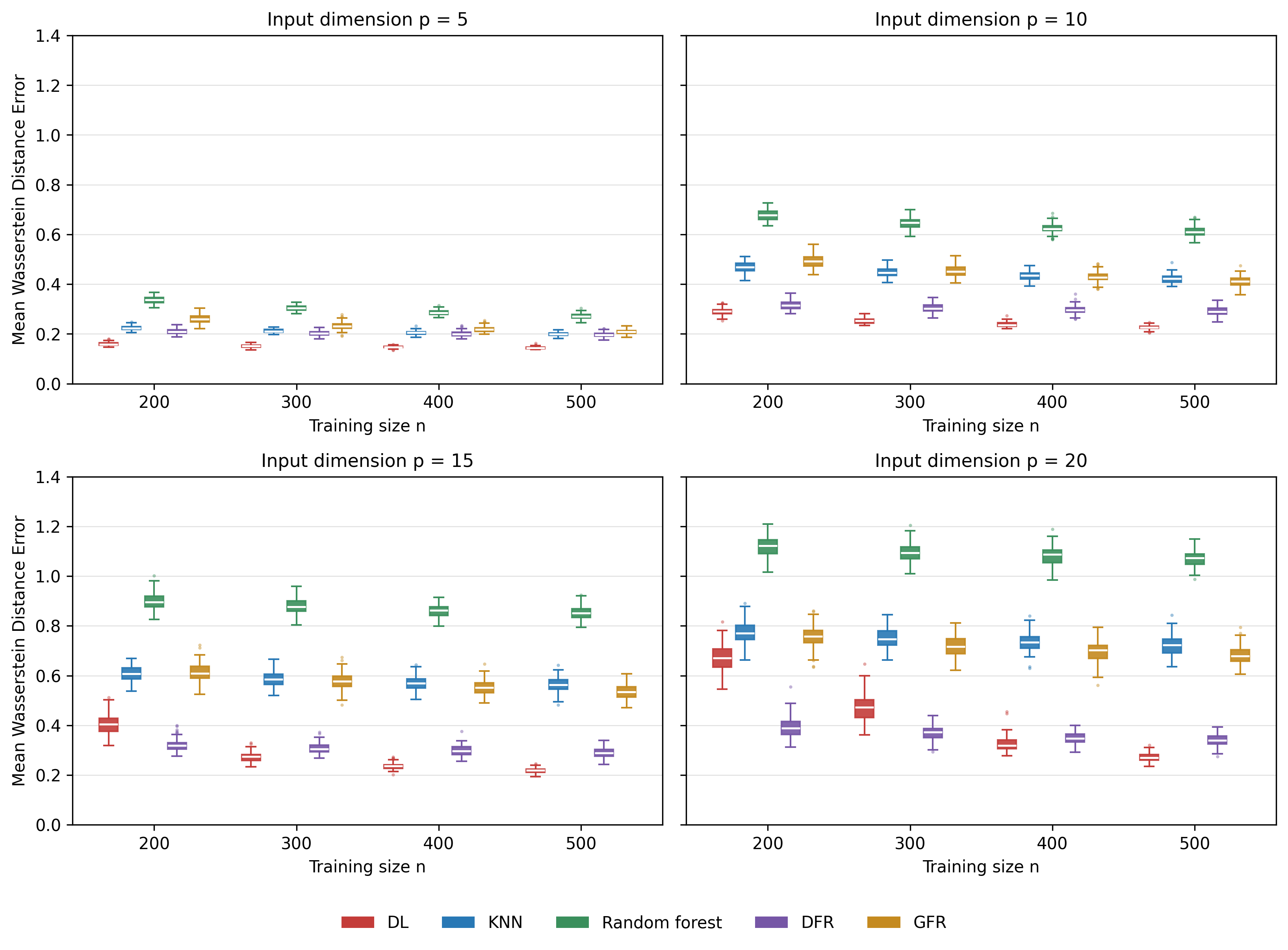}
\caption{Boxplots of prediction error for the competing methods in Scenario~3 with \(q=1\). Each panel corresponds to a different predictor dimension \(p\).}
\label{fig:scenario3q1}
\end{figure}

\begin{figure}[H]
\centering
\includegraphics[width=\textwidth]
{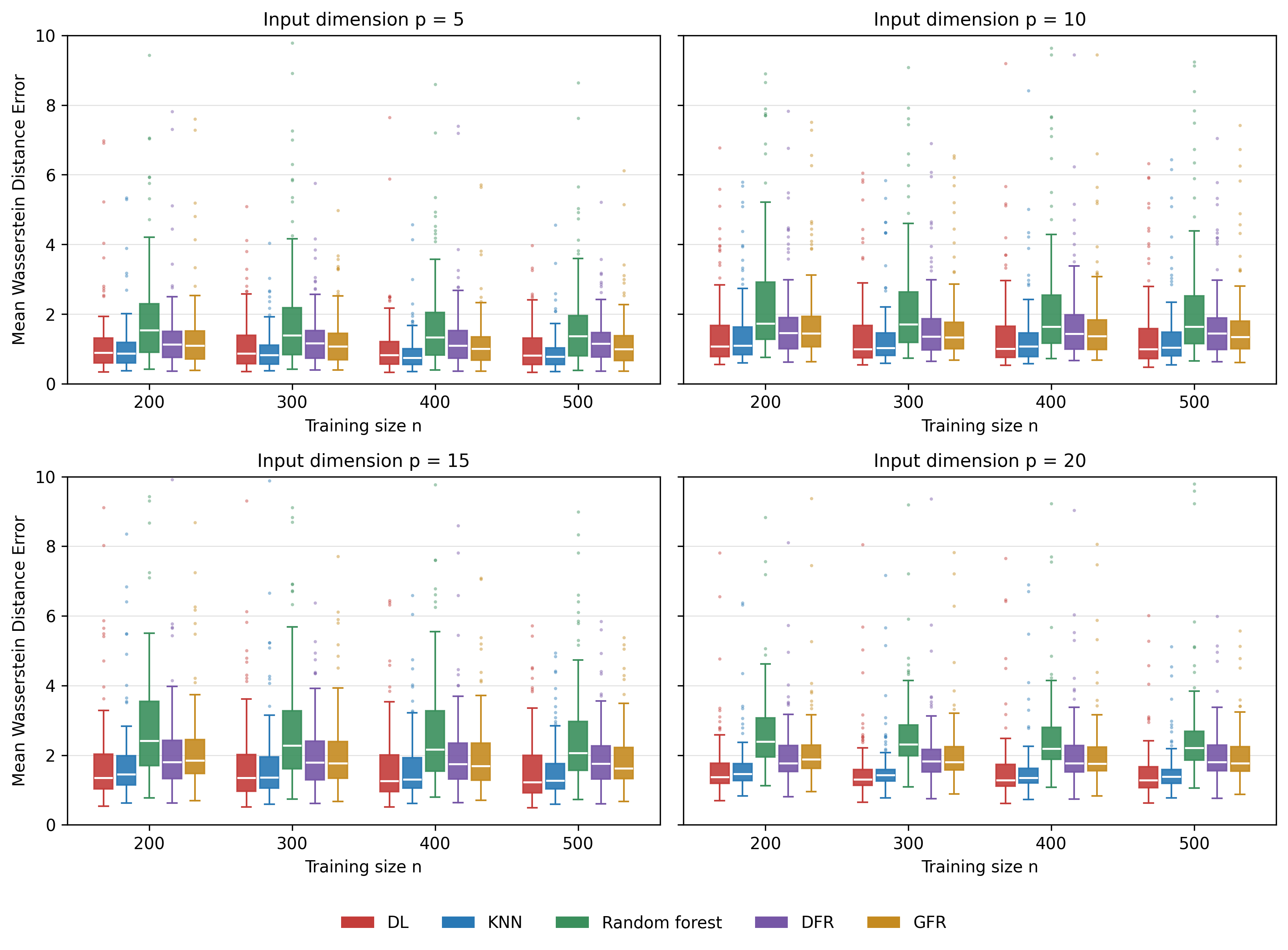}
\caption{Boxplots of prediction error for the competing methods in Scenario~4 with \(q=2\). Each panel corresponds to a different predictor dimension \(p\).}
\label{fig:scenario4q2}
\end{figure}

\begin{figure}[H]
\centering
\includegraphics[width=\textwidth]
{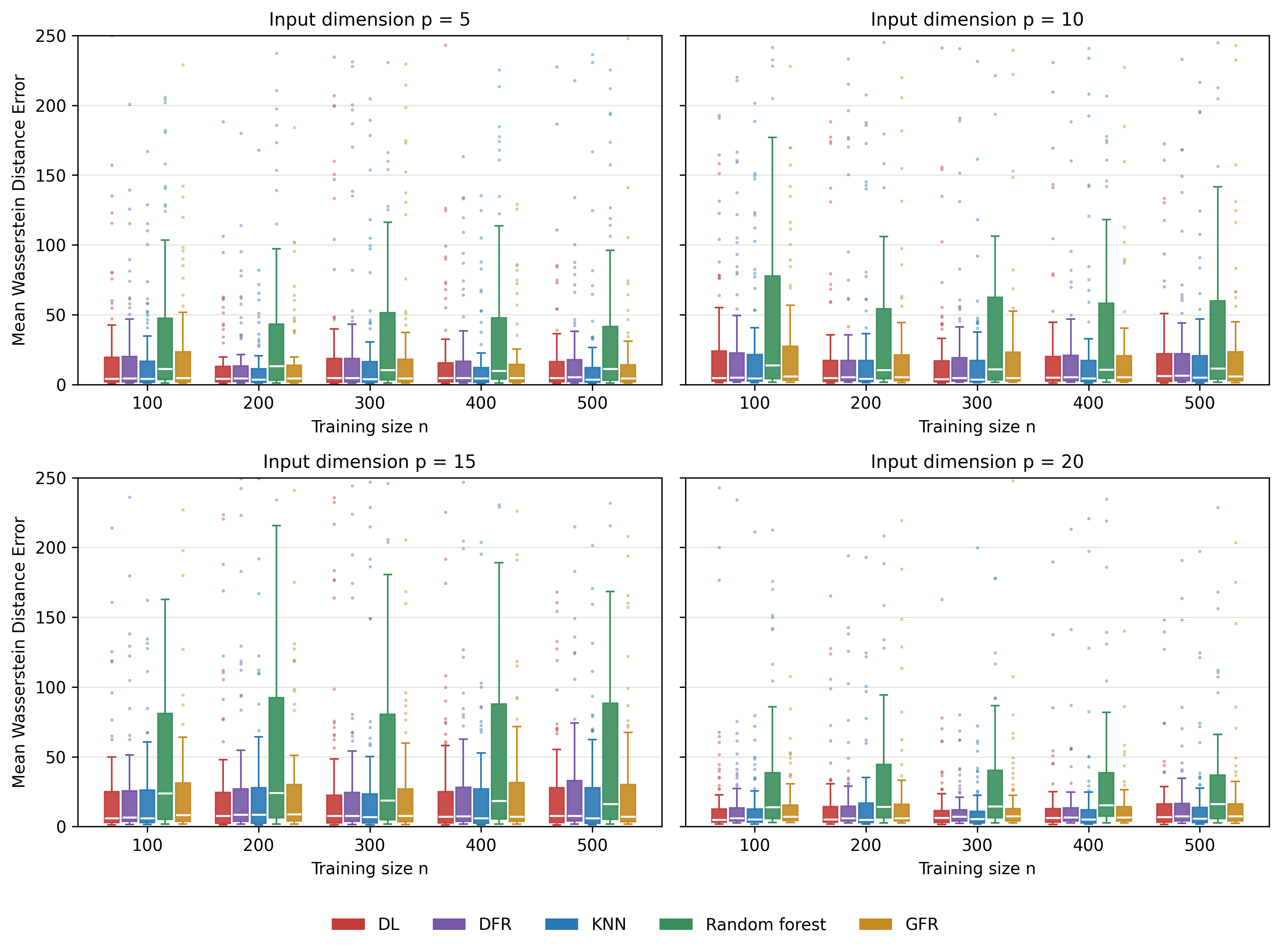}
\caption{Boxplots of prediction error for the competing methods in Scenario~4 with \(q=4\). Each panel corresponds to a different predictor dimension \(p\).}
\label{fig:scenario4q4}
\end{figure}

\section{Digital Health Case Studies Based on Distributional Data Analysis}
\label{sec:res}

\subsection{Motivation and Benchmark}
\label{sec:res-motivation}

Digital health technologies can now collect near-continuous physiological, environmental, and behavioral measurements of many clinically relevant biomarkers. The resulting data are, high-frequency times series that offer new opportunities for personalized interventions and improved population health. A natural way to represent the resulting biomarker time series is as random objects in a general metric space. In free-living conditions, for instance, time series of physiological signals are not directly comparable across subjects because meal times and other contextual factors differ. A natural representation of such a time series is therefore its probability distribution \cite{matabuena2021glucodensities}---for example, as an element of the Wasserstein space introduced earlier, which lacks a natural vector-space structure.

This section illustrates the versatility of our deep-learning approach for predicting multivariate distributions. In the univariate case, \(\mathcal{W}_2(\mathbb{R})\) admits a convenient linear representation via quantile functions and embeds isometrically into \(L^2([0,1])\) \cite{villani2009optimal,petersen2019frechet}. Multivariate distributions, by contrast, do not admit an analogous Hilbert-space embedding that would allow linear methods to be applied directly \cite{villani2009optimal,ambrosio2008gradient}. As a result, competing deep-learning approaches for metric-space-valued outcomes that rely on ISOMAP \cite{tenenbaum2000global} may yield inaccurate estimates. ISOMAP estimates geodesic distances through shortest paths in a neighborhood graph. For multivariate distributions, however, the Wasserstein geometry is not locally Euclidean in the same sense as in the one-dimensional case, and reliable graph construction requires a large, well-spread sample---an assumption rarely satisfied in clinical datasets.

\paragraph{Notation and outcome.}
For individual \(i\), let \(G_{ij}\in\mathbb{R}^s\), \(j=1,\dots,n_i\), denote the \(j\)th repeated measurement vector, where \(s\) is the biomarker dimension and \(n_i\) is the number of repeated measurements for that subject. We treat these measurements as samples from an individual-specific probability distribution \(F_i\) on \(\mathbb{R}^s\), and estimate the distributional response by the empirical distribution
\[
\widehat{F}_{i}
=
\frac{1}{n_i}\sum_{j=1}^{n_i}\delta_{G_{ij}}.
\]
The observed distributional response is \(Y_i := \widehat{F}_{i}\). In the glucose application below, each response is instead represented by a fixed number \(s_1=2000\) of samples; here \(s_1\) generically denotes the number of samples representing a response. To quantify dissimilarity between two responses \(Y_i\) and \(Y_j\), we use the \(2\)-Wasserstein distance \(\mathcal{W}_2(\cdot,\cdot)\) introduced in Example~1. The regression target is the conditional Fr\'echet mean
\[
m_\oplus(x)
=
\arg\min_{P\in\mathcal{Y}}
\mathbb{E}\bigl[
\mathcal{W}_2^2(Y,P)\mid X=x
\bigr],
\]
provided the minimizer exists and is unique. We estimate \(m_\oplus\) using the deep-learning method (DL) proposed in { Section~\ref{sec:met}}. Prediction error for a held-out subject is
\[
\widehat{\epsilon}_{i}
=
\mathcal{W}_2^2\bigl(Y_i,\widehat m_{\oplus}^{(-i)}(X_i)\bigr), \qquad i=1,\dots, n,
\]
where \(\widehat m_{\oplus}^{(-i)}\) is the estimator trained without subject \(i\).

\paragraph{Competing methods and evaluation.}
In both case studies we compare DL with random forests (RF) \cite{capitaine2024frechet}, global Fr\'echet regression (GFR) \cite{petersen2019frechet}, deep Fr\'echet regression (DFR) \cite{iao2025deep}, which uses an ISOMAP manifold step \cite{tenenbaum2000global}, and \(k\)-nearest neighbors (KNN) \cite{devroye1996probabilistic}. Performance is assessed using (i) the distribution of \(\widehat{\epsilon}_i\) on a test sample \(\mathcal{D}_{\mathrm{test}}\) of \(20\) individuals and (ii) a jackknife-based assessment over \(B=20\) random individuals, summarized by boxplots of the resulting error distributions. The same protocol is used in Sections~\ref{sec:physical} and~\ref{sec:glucose}.

\subsection{Results: 3D Physical Activity Distribution Data}
\label{sec:physical}

We analyze physical-activity data from NHANES 2011--2012. The sample contains \(n=632\) participants. For participant \(i\), \(G_{ij}\in\mathbb{R}^3\) denotes the \(j\)th repeated activity measurement along the three ActiGraph axes, and the response \(Y_i\) is the empirical distribution of these measurements, as defined in Section~\ref{sec:res-motivation}. The predictors \(X_i\) are age, sex, body mass index, grip strength, and four principal component scores derived from spirometry measurements, following the feature-extraction procedure described in prior work \cite{ghosal2026physical}. The goal is to predict a distributional response from individual-level covariates; error is measured by \(\mathcal{W}_2\) as in Section~\ref{sec:res-motivation}.

Figure~\ref{fig:sample_distribution} displays the estimated three-dimensional distributions for participants 115, 368, and 500.

\begin{figure}[ht!]
    \centering
     \includegraphics[width=0.95\textwidth]{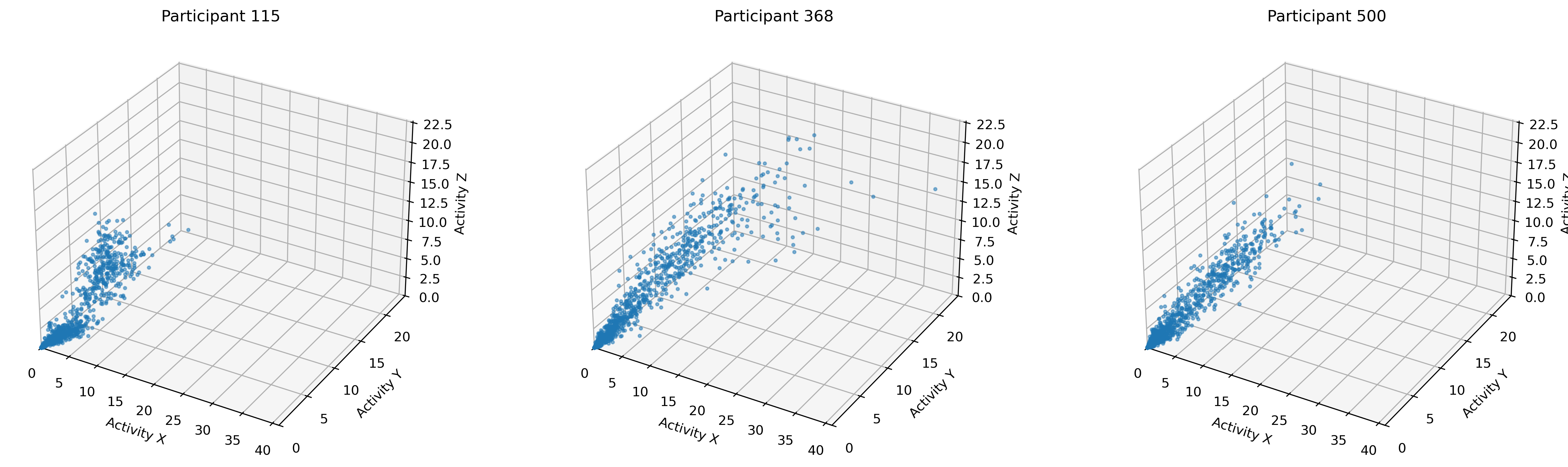}
    \caption{3D distributions for Participants 115, 368, and 500.}
    \label{fig:sample_distribution}
\end{figure}

Figure~\ref{fig:physical_lineplot} shows Wasserstein errors for the \(20\) held-out participants, and Figure~\ref{fig:physical_boxplot} summarizes the error distributions. DL has the smallest median error overall and performs competitively for most held-out participants, although it also produces a few larger errors. RF and KNN perform well, with error levels broadly comparable to DL. GFR and DFR yield substantially larger errors. In this application, DL yields lower errors than GFR and DFR while remaining broadly comparable to RF and KNN.

\begin{figure}[ht]
    \centering
    \begin{minipage}[t]{0.48\textwidth}
        \centering
        \includegraphics[width=\textwidth]{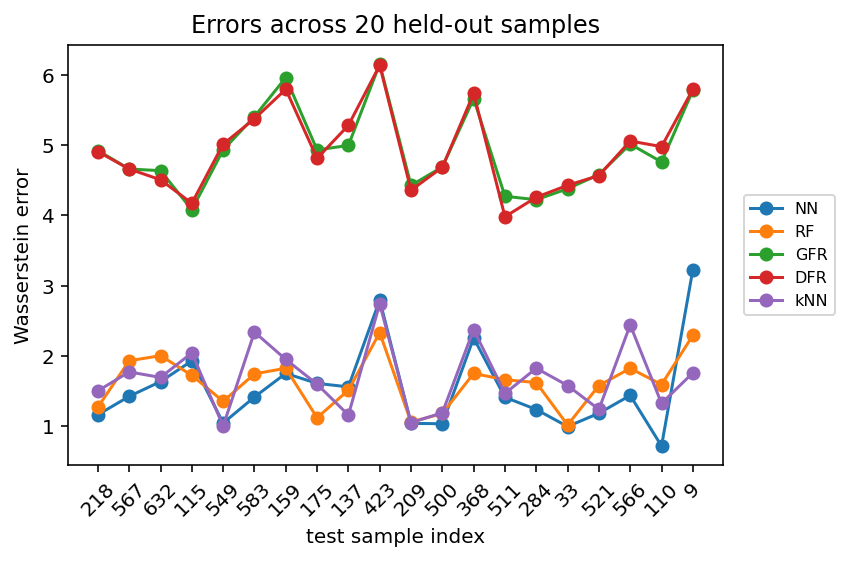}
        \caption{Physical activity error line plot.}
        \label{fig:physical_lineplot}
    \end{minipage}
    \hfill
    \begin{minipage}[t]{0.48\textwidth}
        \centering
        \includegraphics[width=\textwidth]{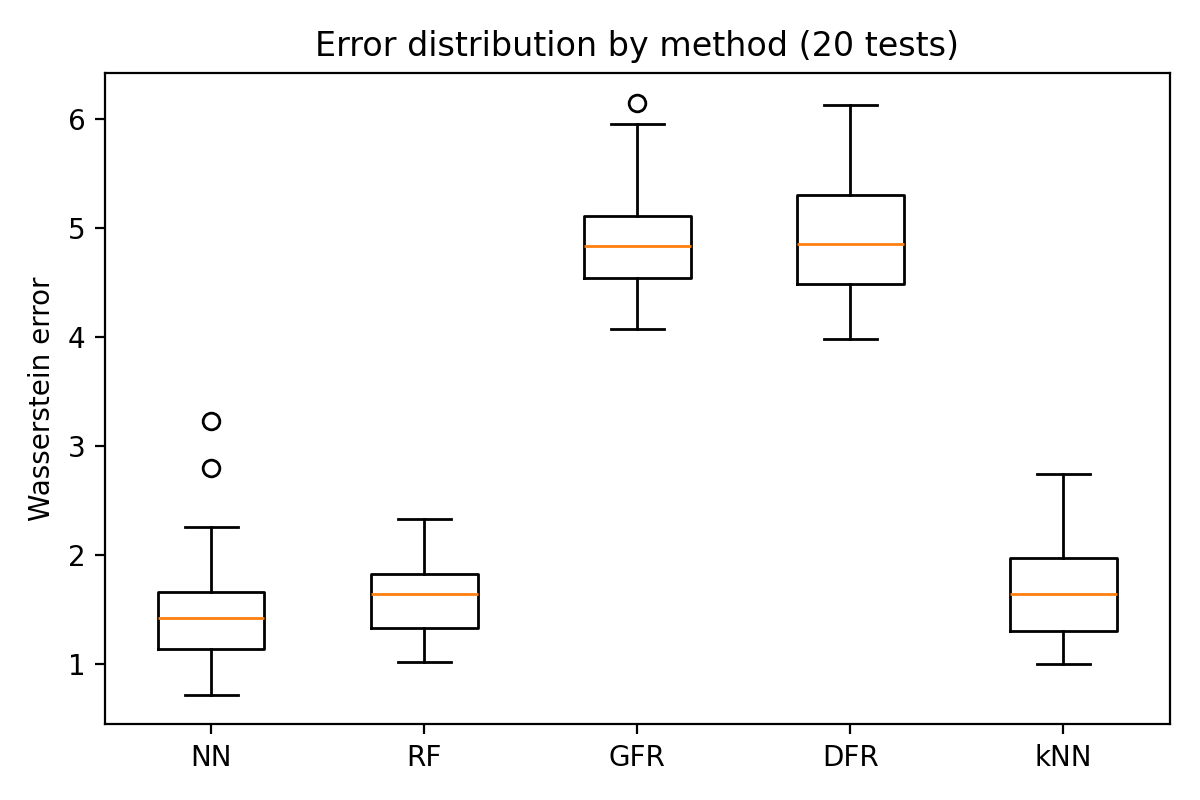}
        \caption{Physical activity error boxplot.}
        \label{fig:physical_boxplot}
    \end{minipage}
\end{figure}

\subsection{Glucose Distribution Data}
\label{sec:glucose}

We now evaluate DL on a glucose distribution dataset constructed from continuous glucose monitoring (CGM) measurements and clinical covariates. For each patient, we use four clinical predictors: \texttt{LabA1cResult}, \texttt{AgeAsOfRandDt}, \texttt{Weight}, and \texttt{DurDiabetes} used in our prior research \cite{matabuena2026multilevel}. After removing subjects with missing covariates, we obtain a processed dataset with \(399\) participants. For each  individual the response is the empirical distribution of the CGM measurements in the \((\text{hour},\text{glucose})\) plane; each subject is represented by \(s_1=2000\) samples.

We compare DL with RF, GFR, DFR, and KNN using a
leave-one-out validation procedure, as described in
Section~\ref{sec:res-motivation}. From the full sample
of 399 participants, we randomly select 20 individuals
for evaluation. For each selected participant, we
independently fit the competing models using the
remaining 398 participants and predict the
distribution-valued response of the held-out individual.
This procedure is repeated for all 20 selected
participants, yielding 20 out-of-sample predictions
per method. Prediction accuracy is then assessed
by comparing each predicted distribution with
the corresponding observed distribution..Performance is measured by the Wasserstein error \(\widehat{\epsilon}_i\) between the predicted and observed glucose distributions.

For DL, we first compute the pairwise  Wasserstein distance matrix \cite{bonneel2015sliced,peyre2019computational} among the training responses, then train a neural network mapping the clinical covariates to the corresponding distance profile. Given a test patient, the network predicts distances to the training subjects, and the final prediction selects the nearest training distribution. The network uses four hidden layers, each of width \(20\), with learning rate \(0.05\), batch size \(40\), and \(1000\) training epochs.

Figure~\ref{fig:initial-participant-heatmaps} shows representative empirical glucose distributions over \(24\) hours. Each panel displays the joint distribution of hour and glucose level, with vertical bands reflecting repeated measurements at each time point. Most glucose values lie between \(70\) and \(200\) mg/dL, but the empirical distributions also exhibit right-skewed tails with occasional hyperglycemic spikes above \(300\) mg/dL. There is clear heterogeneity across participants: some distributions show more extreme excursions, while others are more tightly concentrated around moderate glucose levels. The distributions differ in both spread and tail behavior, supporting a distributional modeling approach.

Figure~\ref{fig:glucose_lineplot} reports Wasserstein errors for the \(20\) held-out participants. Prediction error varies substantially across participants for all methods, indicating considerable heterogeneity in the glucose distributions and suggesting that some subjects are intrinsically harder to predict. The error curves cross frequently, so no single method uniformly dominates. This highlights the importance of examining both patient-level performance and aggregate summaries.

Figure~\ref{fig:glucose_boxplot} summarizes the distribution of Wasserstein errors across the \(20\) test participants. Among competing methods, RF has the smallest median error, while DL and GFR yield comparable median performance and remain competitive overall. DFR shows a slightly larger median error, and KNN has the largest median. The interquartile ranges overlap substantially, indicating that differences across methods are moderate and depend on the individual patient. In particular, DL exhibits both very small and very large errors across subjects, suggesting that it adapts well in some cases but is more sensitive to subject-level heterogeneity than RF.

Overall, these results illustrate the potential of the proposed method while indicating that no single approach performs uniformly best across the two distribution-valued applications considered.

\begin{figure}[ht!]
    \centering
    \begin{minipage}[t]{0.48\textwidth}
        \centering
        \includegraphics[width=\textwidth]{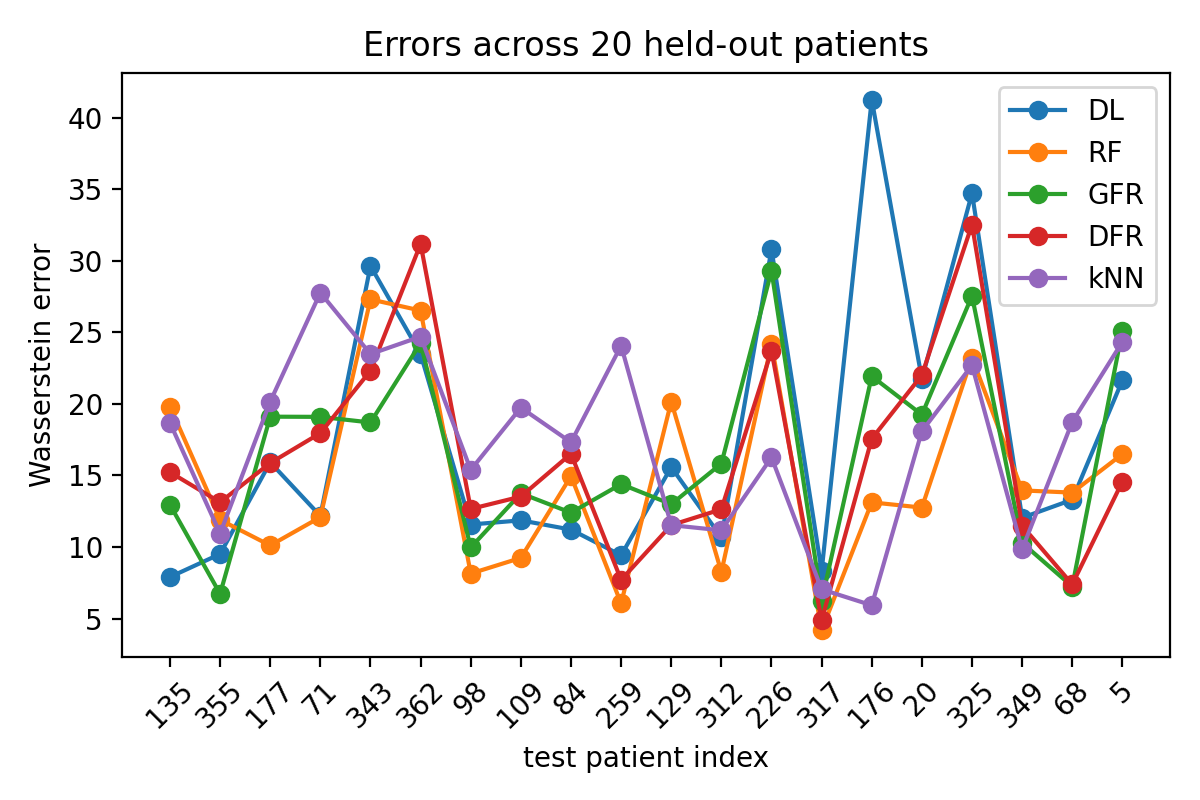}
        \caption{Glucose error line plot.}
        \label{fig:glucose_lineplot}
    \end{minipage}
    \hfill
    \begin{minipage}[t]{0.48\textwidth}
        \centering
        \includegraphics[width=\textwidth]{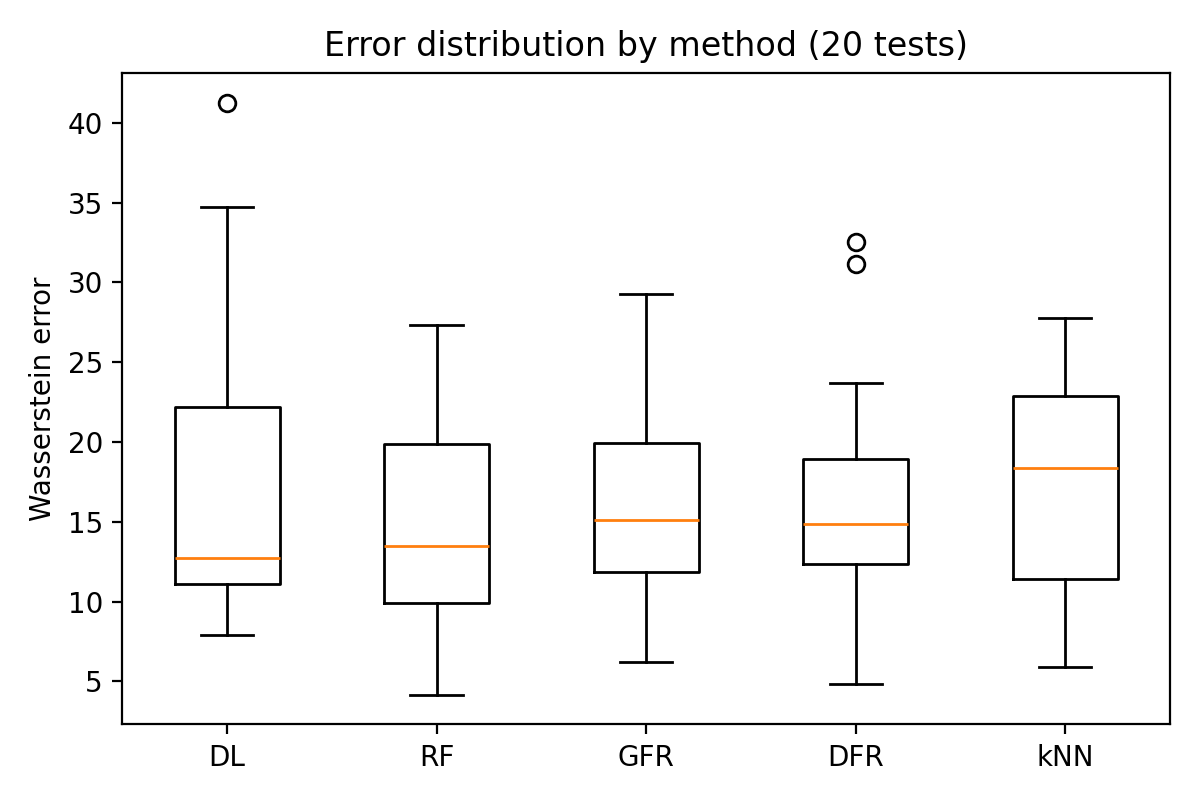}
        \caption{Glucose error boxplot.}
        \label{fig:glucose_boxplot}
    \end{minipage}
\end{figure}

\section{Final Remarks}\label{sec:dis}

In this paper, we introduce a deep learning framework for
regression with responses taking values in general metric
spaces, building on the anchor-based regression approach
of \cite{matabuena2026randomeffects}. Our estimator reduces
the original regression problem to a collection of
scalar-valued deep learning tasks that can be trained
independently and in parallel, without requiring an
embedding of the response space.

We establish excess-risk bounds for general ReLU neural
network classes and derive explicit convergence rates
under hierarchical composition assumptions
\cite{kohler2021rate}. These rates depend on the smoothness
and intrinsic dimensions of the component functions,
rather than directly on the ambient predictor dimension,
providing conditions under which the curse of
dimensionality can be mitigated. Relative to approaches based on learned manifold representations or Fr\'echet-regression output layers \cite{iao2025deep,zhou2026e2m,kim2026dfnn}, our framework analyzes a distinct anchor-based estimator. Its excess-risk rates are explicit under hierarchical-composition and anchor-coverage conditions and apply to bounded metric response spaces, including functional and distributional data.

We evaluate the empirical performance of the proposed method through simulation studies and applications to digital health data, where the outcomes are probability distributions. In the multivariate distributional setting, some existing deep learning approaches rely on isometric or approximately isometric Euclidean embeddings \cite{iao2025deep}. Such embeddings, however, need not exist for general metric spaces. Our method avoids this structural requirement by operating directly through the metric geometry of the response space. Across the simulation settings considered, the proposed approach is particularly competitive as the sample size and complexity of the regression problem increase, and it achieves lower test error than the competing approaches in several configurations. The digital health applications further illustrate its ability to accommodate complex distribution-valued outcomes arising from high-resolution physiological data.

The main limitation of the proposed approach is its computational cost, which results from fitting a potentially large collection of univariate regression models. Although these fits can be carried out in parallel, the overall computational burden may become substantial for large datasets. Future work will therefore focus on developing more scalable variants, including subsampling strategies that reduce the number of required model fits. Another important direction is uncertainty quantification \cite{lugosi2026conformalknnpredictiveuncertainty}, for example through conformal prediction methods adapted to metric-space-valued responses. Finally, we plan to extend the framework to settings with non-Euclidean predictors, potentially building on prototype-based representations and related universal regression approaches \cite{gyorfi2021universal}.

\section*{Acknowledgements}
Oscar Hernan Madrid Padilla was partially supported by the
National Science Foundation under CAREER Award DMS-2541747.

\newpage

\appendix

\allowdisplaybreaks[1]
%\DeclarePairedDelimiter\ceil{\lceil}{\rceil}
%\DeclarePairedDelimiter\floor{\lfloor}{\rfloor}

	%\section{Introduction}

\Needspace{8\baselineskip}
\section{Prediction-error consequence of Theorem~\ref{theo1} }

\begin{remark}
\label{rem1}
Suppose that $M$ satisfies the following growth condition around its
conditional minimizer: there exist a measurable function
$\Delta^2:\mathcal Y\times\mathcal Y\to[0,\infty)$ and a constant
$c_0>0$, independent of $x$ and $n$, such that
\begin{equation}
\label{eqn:e18}
M(x,y)-M(x,m(x))
\geq
c_0\Delta^2(m(x),y),
\qquad
x\in\mathcal X,\quad y\in\mathcal Y.
\end{equation}
This condition requires the conditional Fr\'echet risk to increase at
least at rate $\Delta^2(m(x),y)$ as $y$ moves away from the minimizer
$m(x)$.

Applying \eqref{eqn:e18} with $y=\widehat m(x)$ gives, pointwise in
$x$,
\[
c_0\Delta^2(m(x),\widehat m(x))
\leq
M(x,\widehat m(x))-M(x,m(x)).
\]
Conditional on the training data $\mathcal D_n$, the estimator
$\widehat m$ is fixed. Integrating the preceding inequality with
respect to $P_X$ therefore yields
\[
\begin{aligned}
c_0\mathbb E\!\left[
\Delta^2(m(X),\widehat m(X))
\mid\mathcal D_n
\right]
&=
c_0\int
\Delta^2(m(x),\widehat m(x))\,dP_X(x)\\
&\leq
\int
\{M(x,\widehat m(x))-M(x,m(x))\}\,dP_X(x)\\
&=
\mathcal R(\widehat m)-\mathcal R(m).
\end{aligned}
\]
Here the expectation is over a fresh covariate $X$, conditional on
the training data.

Consequently, the excess-risk bound in Theorem~\ref{theo1} implies,
with probability approaching one,
\[
\begin{split}
&\mathbb E\!\left[
\Delta^2(m(X),\widehat m(X))
\mid\mathcal D_n
\right]\\
&\quad\lesssim
r_n+
\sqrt{
\frac{
|I|\max\{\mathcal U_n^2,A_n^2\}
\max_{i\in I}\{L_i^2\nu_i^2\log(L_i\nu_i)\}
\log(A_nn)
}{n}
}
+\sqrt{|I|\phi_n},
\end{split}
\]
where the fixed factor $c_0^{-1}$ is absorbed into the implicit
constant. Likewise, Corollary~\ref{cor1} gives
\[
\mathbb E\!\left[
\Delta^2(m(X),\widehat m(X))
\mid\mathcal D_n
\right]
=
O_{\mathbb P}(r_n+\alpha_n).
\]

For Euclidean squared-distance loss, if the conditional mean lies in the response domain, then
\[
M(x,y)-M(x,m(x))=\|y-m(x)\|_2^2,
\]
so \eqref{eqn:e18} holds with $c_0=1$ and $\Delta=d$. This identity does not replace the boundedness and coverage assumptions of Theorem~\ref{theo1}. 

A metric-space example corresponding to scalar median regression is obtained by taking
\[
d(a,b)=\sqrt{|a-b|/2},
\qquad a,b\in\mathcal Y\subset\mathbb R.
\]
Under standard local density conditions ensuring uniqueness of the conditional median and quadratic growth of the conditional median loss, the growth condition \eqref{eqn:e18} holds with
\[
\Delta^2(a,b)=\min\{|a-b|,|a-b|^2\}.
\]
Related growth conditions for quantile regression are studied in \cite{madrid2022risk,padilla2022quantile}. The present metric formulation corresponds specifically to the symmetric median case.
\end{remark}

\section{ReLU neural network class}
\label{sec:neural_network_class}

For a network with $L$ hidden layers of widths $w_1,\ldots,w_L$, consider a multilayer feedforward neural network $f:\mathbb{R}^d\to\mathbb{R}$ defined by
\begin{align}
f(x) = \sum_{i=1}^{w_L} c_{1,i}^{(L)} f_i^{(L)}(x) + c_{1,0}^{(L)}, \label{eq:form of approximation function 0}
\end{align}
\noindent where $c_{1,0}^{(L)}, \ldots, c_{1,w_L}^{(L)} \in \mathbb{R}$ are the output weights and the functions $f_i^{(L)}$ are defined recursively as
\begin{align}
f_i^{(s)}(x) = \tau\left( \sum_{j=1}^{w_{s-1}} c_{i,j}^{(s-1)} f_j^{(s-1)}(x) + c_{i,0}^{(s-1)} \right), \quad s \in \{2, \ldots, L\}, \label{eq:form of approximation function L}
\end{align}
\noindent with weights $c_{i,0}^{(s-1)}, \ldots, c_{i,w_{s-1}}^{(s-1)} \in \mathbb{R}$. The first hidden layer is given by
\begin{align}
f_i^{(1)}(x) = \tau\left( \sum_{j=1}^d c_{i,j}^{(0)} x_j + c_{i,0}^{(0)} \right), \label{eq:form of approximation function L2}
\end{align}
\noindent for weights $c_{i,0}^{(0)}, \ldots, c_{i,d}^{(0)} \in \mathbb{R}$.

Then, we denote by $\mathcal{F}(L,\nu)$ the functions of the form (\ref{eq:form of approximation function 0}) with $w_1  = \ldots = w_L =\nu$.

\section{Hierarchical composition models}
\label{sec:hierchical}

\begin{definition}[$(p, C)$-smoothness]\label{definition: p,c smoothness}
Let $p=q+s$ for some $q \in \mathbb{N}=\mathbb{Z}^{+}\cup\left\{0\right\}$ and $0<s \leq 1$. A function $g: \mathbb{R}^d \rightarrow \mathbb{R}$ is called $(p, C)$-smooth, if for every $\alpha=\left(\alpha_1, \ldots, \alpha_d\right) \in \mathbb{N}^d$, where $d \in \mathbb{Z}^{+}$ , with $\sum_{j=1}^d \alpha_j=q$ the partial derivative $\partial^q g /\left(\partial a_1^{\alpha_1} \ldots \partial a_d^{\alpha_d}\right)$ exists and satisfies
$$
\left|\frac{\partial^q g}{\partial a_1^{\alpha_1} \ldots \partial a_d^{\alpha_d}}\left(a\right)-\frac{\partial^q g}{\partial a_1^{\alpha_1} \ldots \partial a_d^{\alpha_d}}\left(b\right)\right| \leq C\| a- b\|^s
$$
for all $a, b \in \mathbb{R}^d$.% where $\|\cdot\|$ denotes the Euclidean norm.	
\end{definition}

Next, we recall the definition of hierarchical composition models from \cite{kohler2021rate}.

\begin{definition} [Space of Hierarchical Composition Models, \cite{kohler2021rate}]\label{definition: hierarchical composition model}
For $l=1$ and a smoothness constraint $\mathcal{P}\subseteq(0,\infty)\times\mathbb{N}$, define the hierarchical composition class by
$$
\begin{array}{lll}
         \mathcal{H}(1, \mathcal{P})  & := &\left\{h: \mathbb{R}^d \rightarrow \mathbb{R}: h(a )=g\left(a_{(\pi(1))}, \ldots, a_{(\pi(K))}\right),\right. \text { where }   g: \mathbb{R}^K \rightarrow \mathbb{R} \text { is }\\
    & & \,\,\,(p, C) \text {-smooth for some }(p, K) \in \mathcal{P} \text { and } \pi:\{1, \ldots, K\} \rightarrow\{1, \ldots, d\}\} .
    \end{array}
$$
For $l>1$, we recursively define
$$
\begin{array}{lll}
 \mathcal{H}(l, \mathcal{P})    &:=&  \left\{h: \mathbb{R}^d \rightarrow \mathbb{R}: h(a)=g\left(f_1(a), \ldots, f_K(a)\right),\right. \text { where }  g: \mathbb{R}^K \rightarrow \mathbb{R} \text { is }\\
     && \,\,\,(p, C) \text {-smooth for some }(p, K) \in \mathcal{P}\text { and } \left.f_i \in \mathcal{H}(l-1, \mathcal{P})\right\}.
\end{array}
%\begin{aligned}
%& 
%& g: \mathbb{R}^K \rightarrow \mathbb{R} \text { is }(p, C) \text {-smooth for some }(p, K) \in \mathcal{P} \\
%&\text { and } \left.f_i \in \mathcal{H}(l-1, \mathcal{P})\right\}
%\end{aligned}
$$	
\end{definition}

We now state a more formal version of Theorem \ref{theo2}.

\begin{theorem}\label{v2:theo2}

Suppose Assumption~\ref{as1} holds. For every $y\in\mathcal Y$, assume
$M(\cdot,y)\in\mathcal H(l(y),\mathcal P(y))$, with a chosen representation whose finite, nonempty set of component pairs is $\mathcal P(y)\subset[1,\infty)\times\mathbb N$.
Each component $g$ is $(t_g,C)$-smooth in the sense of Definition~\ref{definition: p,c smoothness}, where $t_g=q_g+s_g$, $q_g\ge0$, $s_g\in(0,1]$, and $(t_g,K_g)\in\mathcal P(y)$. The smoothness constant $C$, the bounds $c_2$ on all derivatives of orders at most $q_g$ (including order zero), and the Lipschitz constant $C_{\mathrm{Lip}}\ge1$ are uniform in $g$ and $y$. Assume
\[
\sup_y l(y)<\infty,\qquad
\sup_y\max_{(t,K)\in\mathcal P(y)}\max\{t,K\}<\infty.
\]
Let
\[
\phi_n:=\sup_{y\in\mathcal Y}\max_{(t,K)\in\mathcal P(y)}n^{-2t/(2t+K)}.
\]

There exist sufficiently large constants $c_3,c_4>0$, independent of $n$ and $y$, for which either architecture
\begin{equation}
\label{eqn:v2_choice1}
L(y)=\lceil c_3\log n\rceil,\qquad
\nu(y)=\left\lceil c_4\max_{(t,K)\in\mathcal P(y)}n^{K/[2(2t+K)]}\right\rceil
\end{equation}
or
\begin{equation}
\label{eqn:v2_choice2}
L(y)=\left\lceil c_3\max_{(t,K)\in\mathcal P(y)}n^{K/[2(2t+K)]}\log n\right\rceil,
\qquad \nu(y)=\lceil c_4\rceil
\end{equation}
gives, assuming the resulting architecture maps are measurable and $A_n$ satisfies \eqref{eqn:an}, the bound

\[
\mathcal R(\widehat m)\le\mathcal R(m)+r_n+
 c_1\sqrt{|I|\max\{\mathcal U_n^2,A_n^2\}\phi_n
             (\log n)^3\log(A_nn)}
\]

with probability approaching one, provided $r_n\downarrow0$ and
\begin{equation}
\label{v2_cond2}
C_n\sup_{y\in\mathcal Y}\min_{i\in I}d(Y_i,y)=o_{\mathbb P}(r_n).
\end{equation}
Here $\mathcal U_n$ and the allowable dependence of constants are as in Theorems~\ref{theo1} and~\ref{theo2}.
\end{theorem}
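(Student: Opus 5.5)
The plan is to deduce Theorem~\ref{v2:theo2} from Theorem~\ref{theo1}. Two things must be supplied. The first is a uniform-in-$y$ bound on the sup-norm approximation error of $M(\cdot,y)$ by the classes $\mathcal F(L(y),\nu(y))$ in \eqref{eqn:v2_choice1}--\eqref{eqn:v2_choice2}, of the form $\sup_y\|\widetilde M(\cdot,y)-M(\cdot,y)\|_\infty^2\lesssim\phi_n$. The second is a verification that the complexity term $\max_{i}L_i^2\nu_i^2\log(L_i\nu_i)/n$ in Theorem~\ref{theo1} is of order $\phi_n$ up to logarithmic factors. Once both hold, the two remainder terms $\sqrt{|I|\cdots/n}$ and $\sqrt{|I|\phi_n}$ merge into the single term $c_1\sqrt{|I|\max\{\mathcal U_n^2,A_n^2\}\phi_n(\log n)^3\log(A_nn)}$. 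Here we use $\max\{\mathcal U_n^2,A_n^2\}\ge 1$ (by \eqref{eqn:an}, $A_n\ge 8$) and $\log(A_nn)\ge 1$ to absorb $\sqrt{|I|\phi_n}$.

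\textbf{Step 1 (approximation).} Invoke the approximation theorem of \cite{kohler2021rate} for hierarchical composition models. For $h\in\mathcal H(l,\mathcal P)$ with components that are $(t,C)$-smooth, have derivatives bounded by $c_2$, and are $C_{\mathrm{Lip}}$-Lipschitz, it yields a dense ReLU network with either $L\asymp\log n$ and width $\asymp\max_{(t,K)}n^{K/[2(2t+K)]}$, or width bounded by a constant and depth $\asymp\max n^{K/[2(2t+K)]}\log n$. This network satisfies $\|f-h\|_{\infty,[0,1]^p}\lesssim\max_{(t,K)\in\mathcal P}n^{-t/(2t+K)}$. The implicit constants depend only on $l$, $\max\{t,K\}$, $C$, $c_2$, $C_{\mathrm{Lip}}$ and $p$. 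Since the statement assumes these quantities are bounded uniformly in $y$, the constants $c_3,c_4$ and the approximation constant can be chosen independently of $y$. Squaring gives $\sup_y\|\widetilde M(\cdot,y)-M(\cdot,y)\|_\infty^2\le c\,\phi_n$, and rescaling $\phi_n$ by the constant $c$ only changes $c_1$. We also need the condition \eqref{eqn:an} to use this $\phi_n$, which it does by hypothesis.

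\textbf{Step 2 (complexity).} Consider architecture \eqref{eqn:v2_choice1}. Here $L_i^2\nu_i^2\lesssim(\log n)^2\max_{(t,K)} n^{K/(2t+K)}$ and $\log(L_i\nu_i)\lesssim\log n$. Hence
\[
\frac{\max_i L_i^2\nu_i^2\log(L_i\nu_i)}{n}\lesssim(\log n)^3\sup_y\max_{(t,K)}n^{-2t/(2t+K)}=(\log n)^3\phi_n .
\]
Architecture \eqref{eqn:v2_choice2} gives the same product $L\nu$ and hence the same bound. Substituting into Theorem~\ref{theo1} yields the claimed rate.

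\textbf{Step 3 (side conditions).} The remaining hypotheses of Theorem~\ref{theo1} must be checked. We have $L\nu\ge2$ once $c_3,c_4$ are large, and the architecture maps are measurable by assumption. The coverage condition \eqref{v2_cond2} is the same as \eqref{cond2}. Theorem~\ref{theo1} also requires $\min_iL_i^2\nu_i^2\log(L_i\nu_i)\log(2A_n^2)/\log n\to\infty$. This holds because $L_i^2\nu_i^2\ge c_3^2(\log n)^2$ diverges faster than $\log n$ and $\log(2A_n^2)\ge\log 128$.

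I expect the main obstacle to be Step 1, namely showing that the Kohler--Langer construction holds uniformly over $y$. Their theorem is stated for a single function with fixed constants, so one must trace through their proof that the constants depend only on the uniform bounds listed. Their result is also stated for a particular $n$-threshold and for approximation on a cube, which is $[0,1]^p$ here since $M(\cdot,y)$ is only needed there. A secondary subtlety is that the hierarchy depth $l(y)$ and the set $\mathcal P(y)$ vary with $y$. Since they range over finitely many configurations bounded uniformly, I would take a maximum of constants over this bounded family.
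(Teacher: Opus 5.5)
Your proposal is correct and follows the paper's proof essentially step for step. The common route is: apply Kohler--Langer's Theorem 3(a)--(b) with constants uniform in $y$ to get $\sup_y\|\widetilde M(\cdot,y)-M(\cdot,y)\|_\infty^2\lesssim\phi_n$, bound $\max_{i}L_i^2\nu_i^2\log(L_i\nu_i)\lesssim n\phi_n(\log n)^3$, check the growth condition, and substitute into Theorem~\ref{theo1} with $\sqrt{|I|\phi_n}$ absorbed.

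The differences are cosmetic. The paper enlarges $c_3,c_4$ so the approximation constant is exactly $1$, which avoids your rescaling of $\phi_n$ inside \eqref{eqn:an}; that rescaling is harmless anyway since $\phi_n\to0$ while $A_n\ge8$. The paper also checks the growth condition through the lower bound $n^{2c}\log n$ rather than your $c_3^2(\log n)^2$, and either bound suffices.
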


\section{Proofs}

\subsection{Proof of Theorem \ref{theo1}}

\begin{proof}
{Step 1: deterministic risk decomposition.}
Fix the training sample and set
\[
k:\mathcal X\to I,\qquad k(x)=\min\argmin_{i\in I}\widehat M_i(x).
\]
Thus $\widehat m(x)=Y_{k(x)}$. Since $m(x)$ minimizes $M(x,\cdot)$,
\begin{equation}
\label{eqn:e1}
\begin{split}
\mathcal R(\widehat m)
&=\mathcal R(m)+\int\{M(x,Y_{k(x)})-M(x,m(x))\}\,dP_X(x)\\
&\le\mathcal R(m)+\int|M(x,Y_{k(x)})-\widehat M_{k(x)}(x)|\,dP_X(x)\\
&\hspace{3.7em}+\int|\widehat M_{k(x)}(x)-M(x,m(x))|\,dP_X(x)\\
&=: \mathcal R(m)+T_1+T_2.
\end{split}
\end{equation}
By Cauchy--Schwarz and the pointwise inequality $\max_i a_i^2\le\sum_i a_i^2$,
\begin{equation}
\label{eqn:e2}
\begin{split}
T_1
&\le\left(\int\max_{i\in I}|M_i(x)-\widehat M_i(x)|^2\,dP_X(x)\right)^{1/2}\\
&\le\left(\sum_{i\in I}\int|M_i(x)-\widehat M_i(x)|^2\,dP_X(x)\right)^{1/2}\\
&\le\sqrt{|I|}\max_{i\in I}\|M_i-\widehat M_i\|_{\mathcal L_2}.
\end{split}
\end{equation}
The elementary inequality $|\min_i a_i-\min_i b_i|\le\max_i|a_i-b_i|$ gives
\begin{equation}
\label{eqn:e3}
\begin{split}
T_2
&\le\int\max_{i\in I}|\widehat M_i(x)-M_i(x)|\,dP_X(x)\\
&\qquad+\int\left\{\min_{i\in I}M(x,Y_i)-M(x,m(x))\right\}\,dP_X(x)\\
&\le\sqrt{|I|}\max_{i\in I}\|M_i-\widehat M_i\|_{\mathcal L_2}\\
&\qquad+\int\left\{\min_{i\in I}M(x,Y_i)-M(x,m(x))\right\}\,dP_X(x).
\end{split}
\end{equation}
To obtain the second inequality in \eqref{eqn:e3}, we bound the first
integral on the right-hand side. Define
\[
e_i(x):=\widehat{M}_i(x)-M_i(x),
\qquad i\in I.
\]
Since $P_X$ is a probability measure, the Cauchy--Schwarz inequality
implies
\begin{align}
\int \max_{i\in I}|e_i(x)|\,dP_X(x)
&\leq
\left(
\int
\left[\max_{i\in I}|e_i(x)|\right]^2
\,dP_X(x)
\right)^{1/2}.
\label{eq:T2-CS}
\end{align}
For every fixed $x\in\mathcal X$, we have
\[
\left[\max_{i\in I}|e_i(x)|\right]^2
=
\max_{i\in I}|e_i(x)|^2
\leq
\sum_{i\in I}|e_i(x)|^2,
\]
because all terms in the sum are nonnegative. Therefore,
\begin{align}
\int \max_{i\in I}|e_i(x)|\,dP_X(x)
&\leq
\left(
\int
\sum_{i\in I}|e_i(x)|^2
\,dP_X(x)
\right)^{1/2}
\nonumber\\
&=
\left(
\sum_{i\in I}
\int |e_i(x)|^2\,dP_X(x)
\right)^{1/2}
\nonumber\\
&=
\left(
\sum_{i\in I}
\|e_i\|_{\mathcal L_2}^2
\right)^{1/2}.
\label{eq:T2-sum-L2}
\end{align}
Next, for every $i\in I$,
\[
\|e_i\|_{\mathcal L_2}^2
\leq
\max_{j\in I}\|e_j\|_{\mathcal L_2}^2.
\]
Hence,
\begin{align}
\left(
\sum_{i\in I}
\|e_i\|_{\mathcal L_2}^2
\right)^{1/2}
&\leq
\left(
|I|
\max_{i\in I}
\|e_i\|_{\mathcal L_2}^2
\right)^{1/2}
\nonumber\\
&=
\sqrt{|I|}
\max_{i\in I}
\|e_i\|_{\mathcal L_2}.
\label{eq:T2-max-L2}
\end{align}
Combining \eqref{eq:T2-CS}--\eqref{eq:T2-max-L2}, and recalling that
$e_i=\widehat{M}_i-M_i$, gives
\begin{equation}
\label{eq:T2-estimation-bound}
\int
\max_{i\in I}
|\widehat{M}_i(x)-M_i(x)|
\,dP_X(x)
\leq
\sqrt{|I|}
\max_{i\in I}
\|\widehat{M}_i-M_i\|_{\mathcal L_2}.
\end{equation}
Substituting this bound into the preceding inequality for $T_2$ yields
\begin{equation}
\label{eqn:e3-refined}
\begin{split}
T_2
&\leq
\sqrt{|I|}
\max_{i\in I}
\|M_i-\widehat{M}_i\|_{\mathcal L_2}\\
&\qquad+
\int
\left\{
\min_{i\in I}M(x,Y_i)
-
M(x,m(x))
\right\}
\,dP_X(x).
\end{split}
\end{equation}

\medskip
{Step 2: coverage under Assumption~\ref{as1}.}
For any $a,b\in\mathcal Y$, the triangle inequality yields
\[
\begin{split}
|M(x,a)-M(x,b)|
&\le\mathbb E\!\left[|d(Y,a)-d(Y,b)|\{d(Y,a)+d(Y,b)\}\mid X=x\right]\\
&\le2C_n d(a,b).
\end{split}
\]
Consequently,
\begin{equation*}
\tag{\theequation a}\label{eqn:e5}
\begin{split}
\min_{i\in I}M(x,Y_i)
&\le\inf_{y\in\mathcal Y}\left\{M(x,y)+2C_n\min_{i\in I}d(Y_i,y)\right\}\\
&\le M(x,m(x))+2C_n\sup_{y\in\mathcal Y}\min_{i\in I}d(Y_i,y).
\end{split}
\end{equation*}
Substituting into \eqref{eqn:e3-refined} gives
\begin{equation}
\label{eqn:e6}
T_2\le\sqrt{|I|}\max_{i\in I}\|M_i-\widehat M_i\|_{\mathcal L_2}
       +2C_n\sup_{y\in\mathcal Y}\min_{i\in I}d(Y_i,y).
\end{equation}
Combining \eqref{eqn:e1}, \eqref{eqn:e2}, and \eqref{eqn:e6},
\begin{equation}
\label{eqn:e7}
\mathcal R(\widehat m)-\mathcal R(m)
\le2\sqrt{|I|}\max_{i\in I}\|M_i-\widehat M_i\|_{\mathcal L_2}
       +2C_n\sup_{y\in\mathcal Y}\min_{i\in I}d(Y_i,y).
\end{equation}

\medskip
{Step 3: conditional network estimation under Assumption~\ref{as1}.}
Fix $i$ and condition on $Y_i$. The $n-1$ pairs
$(X_j,d^2(Y_j,Y_i))$, $j\ne i$, are i.i.d., with regression function $M_i$ and architecture fixed by $Y_i$. Since both the response and its conditional mean lie in $[0,C_n^2]$,
\[
|d^2(Y_j,Y_i)-M_i(X_j)|\le C_n^2=\mathcal U_n.
\]
Moreover, \eqref{eqn:an} implies $d^2(Y_j,Y_i)\le A_n/8$.
We apply the bounded-response oracle bound proved in Theorem~\ref{thm1_v2} below, which makes the finite-sample probability and normalization explicit.

For the fixed predictor dimension $p$, the dense-network pseudodimension bound in \cite{bartlett2019nearly} and the associated empirical covering bound \cite{gyorfi2002distribution} imply the following. Define the normalized clipped class
\[
\mathcal F_{A_n}(L_i,\nu_i):=
 \{f^{A_n}/A_n:f\in\mathcal F(L_i,\nu_i)\}.
\]
After increasing $C_0$ if necessary, for all $0<\delta<1$,
\begin{equation}
\label{eqn:entropy}
\begin{split}
\sup_{x_1,\ldots,x_n\in[0,1]^p}
\log N(\delta,\mathcal F_{A_n}(L_i,\nu_i),\|\cdot\|_n)
&\le C_0L_i^2\nu_i^2\log(L_i\nu_i)\log(2A_n/\delta)\\
&=: \eta_{i,n}(\delta).
\end{split}
\end{equation}
The same upper bound holds for
$\{(f^{A_n}-M_i)/(2A_n):f\in\mathcal F(L_i,\nu_i)\}$,
because translation leaves covering numbers unchanged and contraction cannot increase them. All constants here are uniform in the conditioned anchor. Write
$q_i=C_0L_i^2\nu_i^2\log(L_i\nu_i)$ only as shorthand in the next calculation. Recall that
\[
\eta_{i,n}(\delta)
=
q_i\log\left(\frac{2A_n}{\delta}\right),
\qquad
q_i
:=
C_0L_i^2\nu_i^2\log(L_i\nu_i).
\]
We evaluate the two dyadic sums in
\eqref{eqn:entropy_0_v2} explicitly.

For the first sum,
\[
\eta_{i,n}(2^{-l-l'-1})
=
q_i\left\{
\log(2A_n)+(l+l'+1)\log2
\right\}.
\]
Consequently,
\[
e^{-C_1\eta_{i,n}(2^{-l-l'-1})}
=
(2A_n)^{-C_1q_i}
2^{-C_1q_i(l+l'+1)}.
\]
Let
\[
r_1:=2^{-C_1q_i}\in(0,1).
\]
Then
\begin{align}
&\sum_{l=0}^\infty\sum_{l'=1}^\infty
e^{-C_1\eta_{i,n}(2^{-l-l'-1})}
\nonumber\\
&\qquad=
(2A_n)^{-C_1q_i}
\sum_{l=0}^\infty\sum_{l'=1}^\infty
r_1^{l+l'+1}
\nonumber\\
&\qquad=
(2A_n)^{-C_1q_i}
\left(\sum_{l=0}^\infty r_1^l\right)
\left(\sum_{l'=1}^\infty r_1^{l'+1}\right)
\nonumber\\
&\qquad=
(2A_n)^{-C_1q_i}
\frac{1}{1-r_1}
\frac{r_1^2}{1-r_1}
\nonumber\\
&\qquad=
\frac{
(2A_n)^{-C_1q_i}2^{-2C_1q_i}
}{
(1-2^{-C_1q_i})^2
}.
\label{eq:dyadic-first-sum}
\end{align}

Similarly,
\[
\eta_{i,n}(2^{-l-1})
=
q_i\left\{
\log(2A_n)+(l+1)\log2
\right\},
\]
so
\[
e^{-C_2\eta_{i,n}(2^{-l-1})}
=
(2A_n)^{-C_2q_i}
2^{-C_2q_i(l+1)}.
\]
Letting $r_2:=2^{-C_2q_i}$, we obtain
\begin{align}
\sum_{l=0}^\infty
e^{-C_2\eta_{i,n}(2^{-l-1})}
&=
(2A_n)^{-C_2q_i}
\sum_{l=0}^\infty r_2^{l+1}
\nonumber\\
&=
(2A_n)^{-C_2q_i}
\frac{r_2}{1-r_2}
\nonumber\\
&=
\frac{
(2A_n)^{-C_2q_i}2^{-C_2q_i}
}{
1-2^{-C_2q_i}
}.
\label{eq:dyadic-second-sum}
\end{align}

Combining \eqref{eq:dyadic-first-sum} and
\eqref{eq:dyadic-second-sum} gives
\begin{equation}
\label{eqn:ver5}
\begin{split}
&\sum_{l=0}^\infty\sum_{l'=1}^\infty
e^{-C_1\eta_{i,n}(2^{-l-l'-1})}
+
\sum_{l=0}^\infty
e^{-C_2\eta_{i,n}(2^{-l-1})}
\\
&\qquad=
\frac{
(2A_n)^{-C_1q_i}2^{-2C_1q_i}
}{
(1-2^{-C_1q_i})^2
}
+
\frac{
(2A_n)^{-C_2q_i}2^{-C_2q_i}
}{
1-2^{-C_2q_i}
}.
\end{split}
\end{equation}

The constants $C_1$ and $C_2$ appearing here are the positive
constants in the entropy condition \eqref{eqn:entropy_0_v2} of
Theorem~\ref{thm1_v2}.

We next verify that the right-hand side of \eqref{eqn:ver5} converges
to zero. For $j\in\{1,2\}$,
\[
(2A_n)^{-C_jq_i}
=
\exp\{-C_jq_i\log(2A_n)\}.
\]
Hence, if
\[
q_i\log(2A_n)\longrightarrow\infty,
\]
then
\[
(2A_n)^{-C_jq_i}\longrightarrow0.
\]
Moreover, under the architecture choices considered below, $q_i$ is
eventually bounded away from zero (and in fact diverges). Therefore,
for $j\in\{1,2\}$,
\[
1-2^{-C_jq_i}
\]
is bounded away from zero for all sufficiently large $n$. It follows
that
\[
\frac{(2A_n)^{-C_1q_i}2^{-2C_1q_i}}
     {(1-2^{-C_1q_i})^2}
\longrightarrow0
\]
and
\[
\frac{(2A_n)^{-C_2q_i}2^{-C_2q_i}}
     {1-2^{-C_2q_i}}
\longrightarrow0.
\]
Thus the entire expression in \eqref{eqn:ver5} converges to zero.

We emphasize that this calculation is exact. In particular,
\[
\eta_{i,n}(2^{-l-l'-1})
=
q_i\left\{
\log(2A_n)+(l+l'+1)\log2
\right\},
\]
and therefore
\[
\exp\{-C_1\eta_{i,n}(2^{-l-l'-1})\}
=
(2A_n)^{-C_1q_i}
2^{-C_1q_i(l+l'+1)}.
\]
Thus the two factors arise from the identity
$\exp\{-(a+b)\}=\exp(-a)\exp(-b)$. For every $l\geq0$, using
\[
\eta_{i,n}(\delta)
=
q_i\log\left(\frac{2A_n}{\delta}\right),
\]
we have
\[
\eta_{i,n}(2^{-l-l'})
=
q_i\left\{
\log(2A_n)+(l+l')\log2
\right\}
\]
and
\[
\eta_{i,n}(2^{-l})
=
q_i\left\{
\log(2A_n)+l\log2
\right\}.
\]
Hence, writing
\[
D_l:=\log(2A_n)+l\log2,
\]
the factor $q_i$ cancels and
\begin{align}
\sum_{l'=1}^\infty
\frac{\eta_{i,n}(2^{-l-l'})}
{2^{2l'}\eta_{i,n}(2^{-l})}
&=
\sum_{l'=1}^\infty
\frac{D_l+l'\log2}{4^{l'}D_l}
\nonumber\\
&=
\sum_{l'=1}^\infty\frac1{4^{l'}}
+
\frac{\log2}{D_l}
\sum_{l'=1}^\infty\frac{l'}{4^{l'}}
\nonumber\\
&=
\frac13+
\frac{4\log2}{9D_l}
\nonumber\\
&=
\frac13+
\frac{4\log2}
{9\{\log(2A_n)+l\log2\}}.
\label{eq:dyadic-regularity-calculation}
\end{align}
The identities
\[
\sum_{j=1}^\infty4^{-j}=\frac13,
\qquad
\sum_{j=1}^\infty j4^{-j}=\frac49
\]
were used in the third line. Since $A_n\ge8$ and $l\ge0$,
\[
\log(2A_n)+l\log2
\ge
\log16
=
4\log2.
\]
Consequently,
\[
\sum_{l'=1}^\infty
\frac{\eta_{i,n}(2^{-l-l'})}
{2^{2l'}\eta_{i,n}(2^{-l})}
\le
\frac13+\frac19
=
\frac49
<1,
\qquad l\ge0.
\]
Thus condition \eqref{eqn:cond_sum_v2} holds uniformly in $l$.

Fix $i\in I$ and condition on $Y_i$. The regression sample then has
size
\[
N:=n-1,
\]
with true regression function
\[
f^*=M_i=M(\cdot,Y_i).
\]
In Theorem~\ref{thm1_v2}, take the comparison function to be
\[
\bar f=\widetilde M(\cdot,Y_i),
\]
and define its squared approximation error by
\[
\phi_{i,n}
:=
\|\widetilde M(\cdot,Y_i)-M(\cdot,Y_i)\|_\infty^2.
\]

Recall that
\[
\eta_{i,n}(\delta)
=
q_i\log\left(\frac{2A_n}{\delta}\right),
\qquad
q_i
=
C_0L_i^2\nu_i^2\log(L_i\nu_i).
\]
Evaluating this entropy bound at the scale appearing in
Theorem~\ref{thm1_v2} gives
\[
\eta_{i,n}\left(\frac{1}{24N}\right)
=
q_i\log(48A_nN).
\]
Since $N=n-1\asymp n$, this implies
\[
\eta_{i,n}\left(\frac{1}{24N}\right)
\lesssim
L_i^2\nu_i^2\log(L_i\nu_i)\log(A_nn).
\]
Hence the critical-radius upper bound in
Theorem~\ref{thm1_v2} may be chosen so that
\[
\delta_i^2
\lesssim
\frac{\log n}{n-1}
+
\frac{
L_i^2\nu_i^2\log(L_i\nu_i)\log(A_nn)
}{n-1}.
\]

Applying Theorem~\ref{thm1_v2} with sample size $N=n-1$ therefore gives
\begin{equation}
\label{eq:conditional-L2-first}
\|M_i-\widehat M_i\|_{\mathcal L_2}^2
\leq
C\left\{
\phi_{i,n}
+
A_n^2\delta_i^2
\right\}
\end{equation}
with conditional failure probability at most $C_4n^{-3}$.
Substituting the preceding bound for $\delta_i^2$ into
\eqref{eq:conditional-L2-first} yields
\[
\begin{split}
\|M_i-\widehat M_i\|_{\mathcal L_2}^2
\leq C\Bigg[
&\phi_{i,n}
+
\frac{A_n^2\log n}{n-1}\\
&+
\frac{
A_n^2L_i^2\nu_i^2
\log(L_i\nu_i)\log(A_nn)
}{n-1}
\Bigg].
\end{split}
\]

The middle term is dominated by the last term. Indeed,
$L_i\nu_i\geq2$ implies
\[
L_i^2\nu_i^2\log(L_i\nu_i)
\geq
4\log2,
\]
and $A_n\geq8$ implies
\[
\log(A_nn)\geq\log n.
\]
Consequently,
\[
\frac{A_n^2\log n}{n-1}
\lesssim
\frac{
A_n^2L_i^2\nu_i^2
\log(L_i\nu_i)\log(A_nn)
}{n-1}.
\]
Since $(n-1)^{-1}\asymp n^{-1}$, we obtain
\[
\begin{split}
\|M_i-\widehat M_i\|_{\mathcal L_2}^2
\leq C_3\Bigg[
&
\frac{
A_n^2L_i^2\nu_i^2
\log(L_i\nu_i)\log(A_nn)
}{n}\\
&+
\|\widetilde M(\cdot,Y_i)-M(\cdot,Y_i)\|_\infty^2
\Bigg].
\end{split}
\]

For each $i\in I$, let $E_i$ denote the event on which the preceding
conditional estimation bound holds. Since the conditional failure
probability is bounded uniformly in the conditioned value of $Y_i$,
\[
\mathbb P(E_i^c\mid Y_i)\leq C_4n^{-3}
\qquad\text{a.s.}
\]
Taking expectation with respect to $Y_i$ and using the tower property
gives
\[
\mathbb P(E_i^c)
=
\mathbb E\!\left[
\mathbb P(E_i^c\mid Y_i)
\right]
\leq C_4n^{-3}.
\]
Consequently, by the union bound,
\[
\mathbb P\left(\bigcup_{i\in I}E_i^c\right)
\leq
\sum_{i\in I}\mathbb P(E_i^c)
\leq
C_4|I|n^{-3}.
\]
No independence among the events $\{E_i:i\in I\}$ is needed for this
argument. Since $I\subseteq[n]$, we have $|I|\leq n$, and hence
\[
C_4|I|n^{-3}\leq C_4n^{-2}.
\]

Moreover, by the definition of $\phi_n$,
\[
\|\widetilde M(\cdot,Y_i)-M(\cdot,Y_i)\|_\infty^2
\leq\phi_n
\qquad\text{for every }i\in I.
\]
Also,
\[
L_i^2\nu_i^2\log(L_i\nu_i)
\leq
\max_{k\in I}
\{L_k^2\nu_k^2\log(L_k\nu_k)\}.
\]
Therefore, on $\bigcap_{i\in I}E_i$,
\[
\begin{split}
\max_{i\in I}\|M_i-\widehat M_i\|_{\mathcal L_2}^2
\leq C_3\Bigg[
&
\frac{
\max\{\mathcal U_n^2,A_n^2\}
\max_{i\in I}
\{L_i^2\nu_i^2\log(L_i\nu_i)\}
\log(A_nn)
}{n}
+\phi_n
\Bigg].
\end{split}
\]
Thus, defining
\[
\begin{split}
\Omega_1:=\Bigg\{
\max_{i\in I}\|M_i-\widehat M_i\|_{\mathcal L_2}^2
\leq C_3\Bigg[
&
\frac{
\max\{\mathcal U_n^2,A_n^2\}
\max_{i\in I}
\{L_i^2\nu_i^2\log(L_i\nu_i)\}
\log(A_nn)
}{n}
+\phi_n
\Bigg]
\Bigg\},
\end{split}
\]
we have
\[
\mathbb P(\Omega_1)
\geq
1-C_4|I|n^{-3}
\geq
1-C_4n^{-2}.
\]

The additional growth condition
\begin{equation}
\label{eqn:cond1}
\frac{
\min_{i\in I}
\{L_i^2\nu_i^2\log(L_i\nu_i)\}
\log(2A_n^2)
}{
\log n
}
\xrightarrow{\mathbb P}\infty
\end{equation}
ensures that the preceding entropy regularity conditions hold
uniformly over $i\in I$. Indeed, with
\[
q_i=C_0L_i^2\nu_i^2\log(L_i\nu_i),
\]
the dyadic expressions in \eqref{eqn:ver5} contain factors of the
form
\[
(2A_n)^{-Cq_i}
=
\exp\{-Cq_i\log(2A_n)\}.
\]
Since $A_n\geq8$,
\[
\log(2A_n)\geq\frac12\log(2A_n^2),
\]
and hence \eqref{eqn:cond1} implies
\[
\min_{i\in I}q_i\log(2A_n)\longrightarrow\infty
\]
in probability. Thus the dyadic terms converge to zero uniformly over
the anchors.

We now substitute the simultaneous estimation bound into
\eqref{eqn:e7}. On $\Omega_1$, writing
\[
B_n:=
\frac{
\max\{\mathcal U_n^2,A_n^2\}
\max_{i\in I}
\{L_i^2\nu_i^2\log(L_i\nu_i)\}
\log(A_nn)
}{n},
\]
we have
\[
\max_{i\in I}\|M_i-\widehat M_i\|_{\mathcal L_2}
\leq
C_3^{1/2}\sqrt{B_n+\phi_n}
\leq
C_3^{1/2}
\left\{\sqrt{B_n}+\sqrt{\phi_n}\right\},
\]
where the last inequality uses $\sqrt{a+b}\leq\sqrt a+\sqrt b$.
Therefore, \eqref{eqn:e7} yields
\begin{equation}
\label{eqn:e13}
\begin{split}
\mathcal R(\widehat m)-\mathcal R(m)
\leq{}&
2C_3^{1/2}\Bigg[
\sqrt{
\frac{
|I|\max\{\mathcal U_n^2,A_n^2\}
\max_{i\in I}
\{L_i^2\nu_i^2\log(L_i\nu_i)\}
\log(A_nn)
}{n}
}
+\sqrt{|I|\phi_n}
\Bigg]\\
&+
2C_n\sup_{y\in\mathcal Y}\min_{i\in I}d(Y_i,y).
\end{split}
\end{equation}

Finally, condition \eqref{cond2} implies
\[
\frac{
C_n\sup_{y\in\mathcal Y}\min_{i\in I}d(Y_i,y)
}{r_n}
\xrightarrow{\mathbb P}0.
\]
Hence
\[
2C_n\sup_{y\in\mathcal Y}\min_{i\in I}d(Y_i,y)
\leq r_n
\]
with probability approaching one. Since
$\mathbb P(\Omega_1)\to1$, combining these two events proves the
theorem.

\end{proof}

\subsection{Proof of Corollary \ref{cor1}}

\begin{proof}
With the notation of the proof of Theorem \ref{theo1}, the pathwise decomposition  (\ref{eqn:e1}) is unchanged:
\begin{equation}
\label{eqn:e16}
\begin{split}
\mathcal R(\widehat m)
&\le\mathcal R(m)+\int|M(x,Y_{k(x)})-\widehat M_{k(x)}(x)|\,dP_X(x)\\
&\qquad+\int|\widehat M_{k(x)}(x)-M(x,m(x))|\,dP_X(x)\\
&=: \mathcal R(m)+T_1+T_2.
\end{split}
\end{equation}
Condition \eqref{eqn:cond3} implies
\begin{equation}
\label{eqn:e18-corollary}
\begin{split}
T_1&\le\int\max_{i\in I}|\widehat M_i(x)-M_i(x)|\,dP_X(x)\\
&\le\max_{i\in I}\|\widehat M_i-M_i\|_\infty=O_{\mathbb P}(\alpha_n),
\end{split}
\end{equation}
not a deterministic bound by $\alpha_n$. Similarly, since
\[
k(x)\in\arg\min_{i\in I}\widehat M_i(x),
\]
we have
\[
\widehat M_{k(x)}(x)
=
\min_{i\in I}\widehat M_i(x).
\]
Therefore,
\[
T_2
=
\int
\left|
\min_{i\in I}\widehat M_i(x)
-
M(x,m(x))
\right|
\,dP_X(x).
\]
For each fixed $x$, add and subtract
$\min_{i\in I}M_i(x)$ and apply the triangle inequality:
\[
\begin{split}
&
\left|
\min_{i\in I}\widehat M_i(x)
-
M(x,m(x))
\right|
\\
&\quad\le
\left|
\min_{i\in I}\widehat M_i(x)
-
\min_{i\in I}M_i(x)
\right|
\\
&\qquad\quad+
\left|
\min_{i\in I}M_i(x)
-
M(x,m(x))
\right|.
\end{split}
\]
The elementary inequality
\[
\left|\min_i a_i-\min_i b_i\right|
\leq
\max_i|a_i-b_i|
\]
implies
\[
\left|
\min_{i\in I}\widehat M_i(x)
-
\min_{i\in I}M_i(x)
\right|
\leq
\max_{i\in I}
|\widehat M_i(x)-M_i(x)|.
\]
Moreover, since $m(x)$ minimizes $M(x,\cdot)$ over all
$y\in\mathcal Y$,
\[
M(x,m(x))
\leq
M(x,Y_i)
=
M_i(x)
\qquad\text{for every }i\in I.
\]
Hence
\[
\min_{i\in I}M_i(x)-M(x,m(x))\geq0,
\]
and therefore
\[
\left|
\min_{i\in I}M_i(x)-M(x,m(x))
\right|
=
\min_{i\in I}M_i(x)-M(x,m(x)).
\]
Combining these inequalities gives
\[
\begin{split}
T_2
\le{}&
\int
\max_{i\in I}
|\widehat M_i(x)-M_i(x)|
\,dP_X(x)
\\
&+
\int
\left\{
\min_{i\in I}M(x,Y_i)
-
M(x,m(x))
\right\}
\,dP_X(x).
\end{split}
\]
Finally, for every $x$,
\[
\max_{i\in I}
|\widehat M_i(x)-M_i(x)|
\leq
\max_{i\in I}
\|\widehat M_i-M_i\|_\infty,
\]
and thus
\[
\int
\max_{i\in I}
|\widehat M_i(x)-M_i(x)|
\,dP_X(x)
\leq
\max_{i\in I}
\|\widehat M_i-M_i\|_\infty.
\]
Consequently,
\begin{equation}
\label{eqn:e17}
\begin{split}
T_2\le{}&
\max_{i\in I}\|\widehat M_i-M_i\|_\infty\\
&+
\int\left\{
\min_{i\in I}M(x,Y_i)-M(x,m(x))
\right\}\,dP_X(x).
\end{split}
\end{equation}

Proceeding as in Step 2 of the proof of Theorem \ref{theo1}, we obtain that 
\[
\begin{aligned}
&\int\left\{
\min_{i\in I}M(x,Y_i)-M(x,m(x))
\right\}\,dP_X(x)\leq
2C_n\sup_{y\in\mathcal Y}\min_{i\in I}d(Y_i,y).
\end{aligned}
\]
By condition~\eqref{cond2},
\[
C_n\sup_{y\in\mathcal Y}\min_{i\in I}d(Y_i,y)
=
o_{\mathbb P}(r_n),
\]
and hence
\[
2C_n\sup_{y\in\mathcal Y}\min_{i\in I}d(Y_i,y)
=
o_{\mathbb P}(r_n).
\]

Combining \eqref{eqn:e16}, \eqref{eqn:e18-corollary}, and \eqref{eqn:e17} gives
\[
0\le\mathcal R(\widehat m)-\mathcal R(m)
\le 2\max_{i\in I}\|\widehat M_i-M_i\|_\infty+o_{\mathbb P}(r_n)
=O_{\mathbb P}(\alpha_n+r_n).
\]
\end{proof}

\subsection{Auxiliary result}

We use the following bounded-response specialization of the entropy and norm-comparison arguments in \cite{padilla2024confidence}. The specialization is sufficient under Assumption~\ref{as1}, as shown in Step 3 above. It is stated with its own proof rather than quoted as a verbatim restatement of the general-noise theorem.

Let $\{(x_i,y_i)\}_{i=1}^n$ be independent copies of $(X,Y)$ satisfying
\begin{equation}
\label{eqn:model1}
y_i=f^*(x_i)+\epsilon_i,
\end{equation}
where $\mathbb E(\epsilon_i\mid x_i)=0$ and
$\operatorname{Var}(\epsilon_i\mid x_i)=g^*(x_i)$. Conditional on the entire design, the errors are independent. Here $f^*:[0,1]^p\to\mathbb R$ and
$g^*:[0,1]^p\to[0,\infty)$, and
\[
\|g\|_{\mathcal L_2}:=\left(\int g(x)^2\,dP_X(x)\right)^{1/2},\qquad
\|g\|_n:=\left(\frac1n\sum_{i=1}^n g(x_i)^2\right)^{1/2}.
\]

\begin{definition}
\label{def1}
Given a function class $\mathcal F\subset B_\infty(1)$, call $\delta_n>0$ a critical radius if
\[
\mathbb E\!\left[
\sup_{f\in\operatorname{star}(\mathcal F):\,\|f\|_n\le\delta_n}
\frac1n\sum_{i=1}^n\xi_i f(x_i)\,\middle|\,x_1,\ldots,x_n
\right]\le\delta_n^2,
\]
where the $\xi_i$ are independent Rademacher variables independent of the design and
$\operatorname{star}(\mathcal F)=\{\lambda f:0\le\lambda\le1,\ f\in\mathcal F\}$.
Here $B_\infty(1)$ is the unit ball for the sup norm, and the empirical norm is normalized as above.
\end{definition}

\begin{theorem}
\label{thm1_v2}
Let $\mathcal F$ be a deterministic, pointwise-separable class of measurable real functions, and let $\bar f\in\mathcal F$ satisfy
$\|\bar f-f^*\|_\infty\le\sqrt{\phi_n}$. Let
\[
\widehat f\in\argmin_{f\in\mathcal F}\frac1n\sum_{i=1}^n(y_i-f(x_i))^2
\]
be an exact empirical minimizer.
Suppose $|Y|\le\mathcal A_n/8$ almost surely and
\[
\mathcal A_n\ge8\max\{1,\|f^*\|_\infty+\mathcal U_n,
                       8\|f^*\|_\infty,8\sqrt{\phi_n}\}.
\]
Write $\widehat f_{\mathcal A_n}=\widehat f^{\mathcal A_n}$,
$\mathcal F_{\mathcal A_n}=\{f^{\mathcal A_n}/\mathcal A_n:f\in\mathcal F\}$,
and, for analysis only,
\[
\mathcal H_n=
\{(f^{\mathcal A_n}-f^*)/(2\mathcal A_n):f\in\mathcal F\}\subset B_\infty(1).
\]
Suppose a decreasing $\eta_n:(0,1)\to[0,\infty)$ bounds the empirical log covering numbers of both normalized classes, uniformly over all finite designs. The dyadic regularity conditions used in the general entropy formulation are
\begin{equation}
\label{eqn:entropy_0_v2}
\lim_{n\to\infty}\left[
\sum_{l=0}^\infty\sum_{l'=1}^\infty e^{-C_1\eta_n(2^{-l-l'-1})}
+\sum_{l=0}^\infty e^{-C_2\eta_n(2^{-l-1})}
+\mathbb P(\|\epsilon\|_\infty>\mathcal U_n)
\right]=0
\end{equation}
and
\begin{equation}
\label{eqn:cond_sum_v2}
\sup_{l\ge0}\sum_{l'=1}^\infty
\frac{\eta_n(2^{-l-l'})}{2^{2l'}\eta_n(2^{-l})}\le1,
\end{equation}
where denominators are assumed positive for positive constants $C_1$ and $C_2$. They are sufficient regularity conditions and are verified in the application; the bounded-response argument below in fact uses only the covering bound.
Choose
\[
\delta_n=c\sqrt{\frac{\log n+\eta_n(1/(24n))}{n}},
\]
with a sufficiently large numerical constant $c$. This is a critical-radius upper bound for $\mathcal H_n$ in Definition~\ref{def1}. Then, for constants $C,C'>0$ independent of $n$,
\begin{equation}
\label{eqn:claim1_v2}
\max\{\|f^*-\widehat f_{\mathcal A_n}\|_n^2,
       \|f^*-\widehat f_{\mathcal A_n}\|_{\mathcal L_2}^2\}
\le C\{\phi_n+\mathcal A_n^2\delta_n^2\}
\end{equation}
with probability at least $1-C'n^{-3}$, for all $n\ge3$. In particular, the conclusion is a high-probability upper bound, not a little-$o_{\mathbb P}$ assertion. 
\end{theorem}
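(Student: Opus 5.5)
We follow the standard localized least-squares argument for an exact minimizer over a bounded class. We work with the clipped estimator, use the comparison function $\bar f$, and control the empirical-process term through the critical radius of the star hull of $\mathcal H_n$. The empirical-to-population norm transfer then uses the same critical radius.

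\emph{Basic inequality.} Since $\widehat f$ minimizes the empirical squared loss over $\mathcal F$ and $\bar f\in\mathcal F$,
\[
\|y-\widehat f\|_n^2\le\|y-\bar f\|_n^2.
\]
Because $|y_i|\le\mathcal A_n/8\le\mathcal A_n$, clipping at level $\mathcal A_n$ can only move $\widehat f(x_i)$ toward $y_i$. Hence $\|y-\widehat f_{\mathcal A_n}\|_n\le\|y-\widehat f\|_n$. Similarly, $\|\bar f\|_\infty\le\|f^*\|_\infty+\sqrt{\phi_n}<\mathcal A_n$, so $\bar f=\bar f^{\mathcal A_n}$. Expanding $y_i=f^*(x_i)+\epsilon_i$ gives
\[
\|\widehat f_{\mathcal A_n}-f^*\|_n^2\le\|\bar f-f^*\|_n^2+\frac2n\sum_i\epsilon_i\{\widehat f_{\mathcal A_n}(x_i)-\bar f(x_i)\}.
\]
The first term is at most $\phi_n$. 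We split the cross term into a part indexed by $h=(\widehat f_{\mathcal A_n}-f^*)/(2\mathcal A_n)\in\mathcal H_n$ and a deterministic part $\frac2n\sum_i\epsilon_i(f^*-\bar f)(x_i)$. The deterministic part has bounded summands, so Hoeffding's inequality conditional on the design bounds it by $\mathcal U_n\sqrt{\phi_n}\sqrt{\log n/n}\lesssim\phi_n+\mathcal A_n^2\delta_n^2$ with probability $1-n^{-3}$.

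\emph{Localized noise term.} Conditional on the design, the errors are independent, centered, and bounded by $\mathcal U_n\le\mathcal A_n$. We bound
\[
\sup_{h\in\operatorname{star}(\mathcal H_n),\,\|h\|_n\le r}\frac1n\sum_i\epsilon_ih(x_i)
\]
by symmetrization and a contraction/multiplier comparison, which gives $\mathcal A_n$ times the Rademacher complexity. Talagrand's (or Bousquet's) concentration inequality for bounded summands then adds a deviation of order $\mathcal A_n(r\sqrt{t/n}+t/n)$ with $t\asymp\log n$. The Rademacher complexity is controlled by a one-step discretization at scale $1/(24n)$ together with Massart's finite-class lemma. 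With $\log N\le\eta_n(1/(24n))$, this yields roughly $r\sqrt{\eta_n(1/(24n))/n}+1/n$, and so $\delta_n$ with a large constant $c$ is a critical radius for $\mathcal H_n$. We then run the usual peeling over dyadic shells $\|h\|_n\in(2^{k-1}\delta_n,2^k\delta_n]$, using the star-shaped property to scale, with a union bound costing $O(\log n)$ shells. On an event of probability $1-Cn^{-3}$ this gives
\[
\frac1n\sum_i\epsilon_i h(x_i)\lesssim \mathcal A_n\bigl(\delta_n\|h\|_n+\delta_n^2\bigr)
\]
for all such $h$. Substituting into the basic inequality with $\|h\|_n=\|\widehat f_{\mathcal A_n}-f^*\|_n/(2\mathcal A_n)$ and absorbing via $2ab\le a^2/2+2b^2$ gives
\[
\|\widehat f_{\mathcal A_n}-f^*\|_n^2\le C(\phi_n+\mathcal A_n^2\delta_n^2).
\]

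\emph{Empirical to population norm.} Here we apply the uniform norm-comparison result for star-shaped classes bounded in sup norm (Wainwright, Thm.~14.1). With probability $1-Cn^{-3}$, every $h\in\operatorname{star}(\mathcal H_n)$ satisfies
\[
\|h\|_{\mathcal L_2}^2\le2\|h\|_n^2+C\delta_n^2.
\]
The critical radius relevant for this result is again bounded by $\delta_n$, because the same entropy bound controls the population-localized Rademacher complexity after symmetrization. Rescaling by $2\mathcal A_n$ transfers the empirical bound to $\mathcal L_2$. A union bound over the three bad events yields \eqref{eqn:claim1_v2}.

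The main obstacle is the localized noise bound. We must verify that the single-scale entropy $\eta_n(1/(24n))$ really suffices, without a Dudley integral, and that the $\log n$ in $\delta_n$ absorbs both the peeling union bound and the $n^{-3}$ confidence level uniformly in $n\ge3$. As a first attempt we would use the crude discretization argument, accepting the lost chaining factor, since $\delta_n$ already carries the full $\eta_n(1/(24n))$. Only if constants fail to close would we fall back on the dyadic conditions \eqref{eqn:entropy_0_v2}--\eqref{eqn:cond_sum_v2} for a chaining argument.
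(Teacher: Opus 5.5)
Your proposal is correct, but you reach the uniform multiplier bound by a heavier route than the paper. You localize. Symmetrization and contraction reduce the noise process to a Rademacher complexity, Massart's lemma on a $1/(24n)$-net bounds it, Talagrand's inequality gives concentration, and peeling over $O(\log n)$ dyadic shells makes the bound uniform in $\|h\|_n$. You then treat the $\bar f$ part of the cross term with a separate Hoeffding step.

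The paper's multiplier step uses no localization at all:
\begin{itemize}
\item Conditional on the design, it takes a $1/(24n)$-net $\{h_k\}$ of $\mathcal H_n$.
\item It applies Hoeffding to each $v_k=2\mathcal A_n h_k$ with a threshold proportional to that element's own norm, $C\mathcal A_n\delta_n\|v_k\|_n$. Every tail is then $2e^{-cn\delta_n^2}$, whatever the scale of $v_k$.
\item The union bound therefore costs only $\eta_n(1/(24n))$, which $n\delta_n^2$ absorbs.
\item Young's inequality turns this into $\tfrac18\|v_k\|_n^2+C\mathcal A_n^2\delta_n^2$.
\item Net approximation, at an $O(\mathcal A_n^2/n)$ cost, extends it to $|\tfrac2n\sum_i\epsilon_i v(x_i)|\le\tfrac14\|v\|_n^2+C\mathcal A_n^2\delta_n^2$ for every $v=f^{\mathcal A_n}-f^*$.
\item Since $\bar f=\bar f^{\mathcal A_n}$, this one inequality also covers $\bar f-f^*$.
\end{itemize}

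The self-normalized threshold is what replaces both Talagrand and peeling. Because $\delta_n$ already pays the full single-scale entropy, your localization gains nothing here. The ``main obstacle'' you flag is resolved by exactly the crude discretization you planned to try first. Your machinery would only pay off if you wanted a chained (Dudley-type) critical radius or variance-sensitive rates.

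Both versions outsource the norm-comparison step: the paper cites Lemma~6 of \cite{padilla2024confidence}, and you cite Wainwright's Theorem~14.1. Passing to $\operatorname{star}(\mathcal H_n)$ adds only about $\log(24n+1)$ to the entropy, which the $\log n$ in $\delta_n$ absorbs.

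One correction: do not take the almost-sure noise bound from $\mathcal U_n$. The hypotheses only give $\mathbb P(\|\epsilon\|_\infty>\mathcal U_n)\to0$, which cannot deliver an $n^{-3}$ failure probability. Use $|Y|\le\mathcal A_n/8$ instead, as the paper does. It gives $|f^*|\le\mathcal A_n/8$ and $|\epsilon_i|\le\mathcal A_n/4$ almost surely. Your Hoeffding and Talagrand steps then go through unchanged with this constant.
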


\begin{proof}
The entropy-to-critical-radius bound and uniform comparison of empirical and population squared norms follow from the critical-radius calculation and the norm-comparison bound proved in Lemma 6 of \cite{padilla2024confidence}, applied to the \emph{centered} class $\mathcal H_n$. In particular, increasing $c$ if necessary, with probability at least $1-C'n^{-3}$,
\begin{equation}
    \label{eqn:norm_event}
    \|h\|_{\mathcal L_2}^2\le2\|h\|_n^2+C\delta_n^2
\quad\text{for every }h\in\mathcal H_n.
\end{equation}
This application uses the normalized empirical norm and a design-uniform covering bound. Translating by $f^*$ in this analysis does not change the class over which the estimator is fitted.

For completeness, we briefly justify the stochastic bound used below.
Conditional on the design points $x_1,\ldots,x_n$, let
$\{h_1,\ldots,h_N\}$ be a $1/(24n)$-net of $\mathcal H_n$ in the
empirical norm $\|\cdot\|_n$, chosen so that
\[
\log N\leq \eta_n\!\left(\frac{1}{24n}\right).
\]
Since $|Y|\leq\mathcal A_n/8$ almost surely and
$f^*(x)=\mathbb E(Y\mid X=x)$, we have
\[
|f^*(x)|\leq\frac{\mathcal A_n}{8},
\qquad
|\epsilon_i|
=
|Y_i-f^*(x_i)|
\leq
\frac{\mathcal A_n}{4}.
\]
Hence, conditional on $x_1,\ldots,x_n$, the variables
$\epsilon_1,\ldots,\epsilon_n$ are independent, centered, and uniformly
bounded.

For a fixed net element $h_k$, define
\[
v_k:=2\mathcal A_n h_k.
\]
Conditional on $x_1,\ldots,x_n$, the values
$v_k(x_1),\ldots,v_k(x_n)$ are deterministic, whereas
$\epsilon_1,\ldots,\epsilon_n$ are independent and centered. Moreover,
$|\epsilon_i|\leq\mathcal A_n/4$ almost surely. Hence Hoeffding's
inequality implies that, for every $t>0$,
\begin{align}
&\mathbb P\left(
\left|
\frac{2}{n}\sum_{i=1}^n\epsilon_i v_k(x_i)
\right|>t
\,\middle|\,
x_1,\ldots,x_n
\right)
\nonumber\\
&\qquad\leq
2\exp\left\{
-\frac{2nt^2}
{\mathcal A_n^2\|v_k\|_n^2}
\right\},
\label{eq:hoeffding-net-element}
\end{align}
where
\[
\|v_k\|_n^2
:=
\frac1n\sum_{i=1}^n v_k(x_i)^2.
\]
In particular, taking
\[
t=C\mathcal A_n\delta_n\|v_k\|_n
\]
for a sufficiently large numerical constant $C$ gives
\[
\mathbb P\left(
\left|
\frac{2}{n}\sum_{i=1}^n\epsilon_i v_k(x_i)
\right|
>
C\mathcal A_n\delta_n\|v_k\|_n
\,\middle|\,
x_1,\ldots,x_n
\right)
\leq
2\exp\{-c n\delta_n^2\}
\]
for some numerical constant $c>0$.

Let $N$ denote the cardinality of the
$1/(24n)$-net. By construction,
\[
\log N
\leq
\eta_n\left(\frac1{24n}\right).
\]
Therefore, a union bound over all $N$ net elements yields
\begin{align}
&\mathbb P\left(
\max_{1\leq k\leq N}
\frac{
\left|
\frac{2}{n}\sum_{i=1}^n
\epsilon_i v_k(x_i)
\right|
}{
\mathcal A_n\|v_k\|_n
}
>
C\delta_n
\,\middle|\,
x_1,\ldots,x_n
\right)
\nonumber\\
&\qquad\leq
2N\exp\{-c n\delta_n^2\}
\nonumber\\
&\qquad\leq
2\exp\left\{
\eta_n\left(\frac1{24n}\right)
-c n\delta_n^2
\right\}.
\label{eq:union-net}
\end{align}
Since
\[
\delta_n^2
=
c_0^2
\frac{
\log n+\eta_n(1/(24n))
}{n},
\]
with $c_0$ chosen sufficiently large, the last expression is bounded
by $C'n^{-3}$. Consequently, with conditional probability at least
$1-C'n^{-3}$,
\[
\left|
\frac{2}{n}\sum_{i=1}^n
\epsilon_i v_k(x_i)
\right|
\leq
C\mathcal A_n\delta_n\|v_k\|_n
\]
simultaneously for every element $v_k$ of the finite net.

Young's inequality then implies
\[
\left|
\frac{2}{n}\sum_{i=1}^n\epsilon_i v_k(x_i)
\right|
\leq
\frac18\|v_k\|_n^2
+
C\mathcal A_n^2\delta_n^2.
\]

Now let $v=f^{\mathcal A_n}-f^*$ be arbitrary, and write $v=2\mathcal A_n h$ with
$h\in\mathcal H_n$. Let $h_k$ be a net element satisfying
\[
\|h-h_k\|_n\leq\frac{1}{24n},
\]
and define
\[
v_k:=2\mathcal A_n h_k.
\]
Then
\begin{equation}
\label{eq:net-v-distance}
\|v-v_k\|_n
=
2\mathcal A_n\|h-h_k\|_n
\leq
\frac{\mathcal A_n}{12n}.
\end{equation}

We first control the change in the multiplier term. By the
Cauchy--Schwarz inequality,
\begin{align}
\left|
\frac{2}{n}
\sum_{i=1}^n
\epsilon_i\{v(x_i)-v_k(x_i)\}
\right|
&\leq
2\|\epsilon\|_n\|v-v_k\|_n.
\label{eq:multiplier-net-difference}
\end{align}
Since $|\epsilon_i|\leq\mathcal A_n/4$ for every $i$,
\[
\|\epsilon\|_n
=
\left(
\frac1n\sum_{i=1}^n\epsilon_i^2
\right)^{1/2}
\leq
\frac{\mathcal A_n}{4}.
\]
Combining this inequality with \eqref{eq:net-v-distance} gives
\begin{equation}
\label{eq:multiplier-net-remainder}
\left|
\frac{2}{n}
\sum_{i=1}^n
\epsilon_i\{v(x_i)-v_k(x_i)\}
\right|
\leq
\frac{\mathcal A_n^2}{24n}.
\end{equation}

Next, by the triangle inequality,
\[
\|v_k\|_n
\leq
\|v\|_n+\|v-v_k\|_n.
\]
Using $(a+b)^2\leq2a^2+2b^2$ and
\eqref{eq:net-v-distance},
\begin{align}
\frac18\|v_k\|_n^2
&\leq
\frac14\|v\|_n^2
+
\frac14\|v-v_k\|_n^2
\nonumber\\
&\leq
\frac14\|v\|_n^2
+
\frac{\mathcal A_n^2}{576n^2}.
\label{eq:quadratic-net-remainder}
\end{align}

On the high-probability event established for the finite net,
\[
\left|
\frac{2}{n}
\sum_{i=1}^n
\epsilon_i v_k(x_i)
\right|
\leq
\frac18\|v_k\|_n^2
+
C\mathcal A_n^2\delta_n^2.
\]
Therefore, by adding and subtracting $v_k$ and using
\eqref{eq:multiplier-net-remainder} and
\eqref{eq:quadratic-net-remainder},
\begin{align}
\left|
\frac{2}{n}
\sum_{i=1}^n
\epsilon_i v(x_i)
\right|
&\leq
\left|
\frac{2}{n}
\sum_{i=1}^n
\epsilon_i v_k(x_i)
\right|
+
\left|
\frac{2}{n}
\sum_{i=1}^n
\epsilon_i\{v(x_i)-v_k(x_i)\}
\right|
\nonumber\\
&\leq
\frac14\|v\|_n^2
+
C\mathcal A_n^2\delta_n^2
+
\frac{\mathcal A_n^2}{24n}
+
\frac{\mathcal A_n^2}{576n^2}.
\label{eq:extend-net-preabsorb}
\end{align}
Since
\[
\delta_n^2
\gtrsim
\frac{\log n}{n},
\]
the last two terms in \eqref{eq:extend-net-preabsorb} are bounded by
$C\mathcal A_n^2\delta_n^2$ for $n$ sufficiently large. Hence, after
enlarging the constant $C$,
\begin{equation}
\label{eq:multiplier-full-class}
\left|
\frac{2}{n}
\sum_{i=1}^n
\epsilon_i v(x_i)
\right|
\leq
\frac14\|v\|_n^2
+
C\mathcal A_n^2\delta_n^2,
\qquad
v=f^{\mathcal A_n}-f^*,\quad f\in\mathcal F,
\end{equation}
simultaneously over the entire class.

Because $|Y_i|\leq \mathcal A_n/8<\mathcal A_n$, every observed response
$Y_i$ lies in the clipping interval $[-\mathcal A_n,\mathcal A_n]$.
For any $z\in\mathbb R$, projection of $z$ onto this interval cannot
increase its distance to $Y_i$. Hence,
\[
\{Y_i-\widehat f^{\mathcal A_n}(x_i)\}^2
\leq
\{Y_i-\widehat f(x_i)\}^2
\qquad\text{for every }i.
\]
Moreover,
\[
\|\bar f\|_\infty
\leq
\|f^*\|_\infty+\|\bar f-f^*\|_\infty
\leq
\|f^*\|_\infty+\sqrt{\phi_n}
<\mathcal A_n,
\]
by the assumed lower bound on $\mathcal A_n$. Therefore
$\bar f^{\mathcal A_n}=\bar f$.

Since $\widehat f$ is an exact empirical least-squares minimizer and
$\bar f\in\mathcal F$, it follows that
\[
\frac1n\sum_{i=1}^n
\{Y_i-\widehat f_{\mathcal A_n}(x_i)\}^2
\leq
\frac1n\sum_{i=1}^n
\{Y_i-\bar f(x_i)\}^2.
\]
Using $Y_i=f^*(x_i)+\epsilon_i$ and expanding both squared terms gives
\begin{align}
\|\widehat f_{\mathcal A_n}-f^*\|_n^2
&\leq
\|\bar f-f^*\|_n^2
\nonumber\\
&\quad+
\frac2n\sum_{i=1}^n\epsilon_i
\left[
\{\widehat f_{\mathcal A_n}-f^*\}(x_i)
-
\{\bar f-f^*\}(x_i)
\right].
\label{eq:basic-erm-expanded}
\end{align}

On the high-probability multiplier event established above, the bound
\[
\left|
\frac2n\sum_{i=1}^n\epsilon_i v(x_i)
\right|
\leq
\frac14\|v\|_n^2
+
C\mathcal A_n^2\delta_n^2
\]
holds simultaneously for every
$v=f^{\mathcal A_n}-f^*$ with $f\in\mathcal F$.
Applying it first to
$v=\widehat f_{\mathcal A_n}-f^*$ and then to
$v=\bar f-f^*$, and using
\[
\|\bar f-f^*\|_n^2
\leq
\|\bar f-f^*\|_\infty^2
\leq
\phi_n,
\]
we obtain from \eqref{eq:basic-erm-expanded}
\[
\|\widehat f_{\mathcal A_n}-f^*\|_n^2
\leq
\frac54\phi_n
+
\frac14
\|\widehat f_{\mathcal A_n}-f^*\|_n^2
+
C\mathcal A_n^2\delta_n^2.
\]
Therefore,
\[
\frac34
\|\widehat f_{\mathcal A_n}-f^*\|_n^2
\leq
\frac54\phi_n
+
C\mathcal A_n^2\delta_n^2,
\]
and hence, after changing the numerical constant,
\begin{equation}
\label{eq:empirical-estimation-bound}
\|\widehat f_{\mathcal A_n}-f^*\|_n^2
\leq
C\left\{
\phi_n+\mathcal A_n^2\delta_n^2
\right\}.
\end{equation}

Recall that
\[
h
=
\frac{\widehat f_{\mathcal A_n}-f^*}{2\mathcal A_n}.
\]
By homogeneity of the population and empirical $L^2$ norms,
\[
\|h\|_{\mathcal L_2}^2
=
\frac{1}{4\mathcal A_n^2}
\|\widehat f_{\mathcal A_n}-f^*\|_{\mathcal L_2}^2,
\qquad
\|h\|_n^2
=
\frac{1}{4\mathcal A_n^2}
\|\widehat f_{\mathcal A_n}-f^*\|_n^2.
\]
Therefore, on the population--empirical norm comparison event,
\[
\frac{1}{4\mathcal A_n^2}
\|\widehat f_{\mathcal A_n}-f^*\|_{\mathcal L_2}^2
\leq
\frac{2}{4\mathcal A_n^2}
\|\widehat f_{\mathcal A_n}-f^*\|_n^2
+
C\delta_n^2.
\]
Multiplying both sides by $4\mathcal A_n^2$ yields
\[
\|\widehat f_{\mathcal A_n}-f^*\|_{\mathcal L_2}^2
\leq
2\|\widehat f_{\mathcal A_n}-f^*\|_n^2
+
4C\mathcal A_n^2\delta_n^2.
\]
Absorbing the numerical factor $4$ into the generic constant $C$ gives
\[
\|\widehat f_{\mathcal A_n}-f^*\|_{\mathcal L_2}^2
\leq
2\|\widehat f_{\mathcal A_n}-f^*\|_n^2
+
C\mathcal A_n^2\delta_n^2.
\]
Combining this inequality with
\eqref{eq:empirical-estimation-bound} gives
\[
\max\left\{
\|\widehat f_{\mathcal A_n}-f^*\|_n^2,
\|\widehat f_{\mathcal A_n}-f^*\|_{\mathcal L_2}^2
\right\}
\leq
C\left\{
\phi_n+\mathcal A_n^2\delta_n^2
\right\}.
\]
Finally, a union bound over the multiplier event and the
population--empirical norm comparison event gives the asserted
probability bound.
\end{proof}

\subsection{Proof of Theorem \ref{theo2}}

\begin{proof}

We verify the assumptions of Theorem~\ref{theo1} uniformly in $y$.
The uniform bounds on the hierarchy depth and on $K$ imply a uniform
bound on the number of component functions in the chosen hierarchical
representations. Together with the uniform smoothness, derivative, and
Lipschitz bounds, this ensures that the constants in Theorem 3(a)--(b)
of \cite{kohler2021rate} can be chosen independently of $y$.

Fix $y\in\mathcal Y$. By Theorem 3(a)--(b) of
\cite{kohler2021rate}, applied to
$M(\cdot,y)\in\mathcal H(l(y),\mathcal P(y))$ on $[0,1]^p$, one can
construct a fully connected ReLU network
\[
f_y^{\mathrm{app}}\in\mathcal F(L(y),\nu(y))
\]
whose approximation error is controlled by
\[
\|f_y^{\mathrm{app}}-M(\cdot,y)\|_\infty
\lesssim
\max_{(t,K)\in\mathcal P(y)}M_{t,K}^{-2t},
\]
where $M_{t,K}$ is the approximation parameter used in the construction
for a component with smoothness $t$ and input dimension $K$.

Choose
\[
M_{t,K}\asymp n^{1/[2(2t+K)]}.
\]
Then
\[
M_{t,K}^{-2t}
\asymp
n^{-t/(2t+K)}
\]
and
\[
M_{t,K}^{K}
\asymp
n^{K/[2(2t+K)]}.
\]
The quantity $M_{t,K}^{K}$ determines the polynomial growth in the
network architecture supplied by Theorem 3(a)--(b) of
\cite{kohler2021rate}. In the construction corresponding to
\eqref{eqn:v2_choice1}, this growth is accommodated by the width
$\nu(y)$ while $L(y)$ is of logarithmic order. In the construction
corresponding to \eqref{eqn:v2_choice2}, the width remains bounded and
the same polynomial growth is accommodated by the depth $L(y)$,
together with the additional factor $\log n$.

Because all constants in the approximation theorem are uniform in
$y$, the constants $c_3,c_4$ in
\eqref{eqn:v2_choice1}--\eqref{eqn:v2_choice2} can be chosen
sufficiently large, independently of $y$, so that
$f_y^{\mathrm{app}}\in\mathcal F(L(y),\nu(y))$ and
\[
\|f_y^{\mathrm{app}}-M(\cdot,y)\|_\infty
\leq
\max_{(t,K)\in\mathcal P(y)}
n^{-t/(2t+K)}.
\]

By definition,
\[
\widetilde M(\cdot,y)
\in
\argmin_{f\in\mathcal F(L(y),\nu(y))}
\|f-M(\cdot,y)\|_\infty.
\]
Since $f_y^{\mathrm{app}}\in\mathcal F(L(y),\nu(y))$, it follows that
\[
\begin{aligned}
\|\widetilde M(\cdot,y)-M(\cdot,y)\|_\infty
&\leq
\|f_y^{\mathrm{app}}-M(\cdot,y)\|_\infty\\
&\leq
\max_{(t,K)\in\mathcal P(y)}
n^{-t/(2t+K)}.
\end{aligned}
\]
Taking the supremum over $y\in\mathcal Y$ therefore gives
\begin{equation}
\label{eqn:60}
\begin{split}
\sup_{y\in\mathcal Y}
\|\widetilde M(\cdot,y)-M(\cdot,y)\|_\infty
&\leq
\sup_{y\in\mathcal Y}
\max_{(t,K)\in\mathcal P(y)}
n^{-t/(2t+K)}\\
&=
\sqrt{
\sup_{y\in\mathcal Y}
\max_{(t,K)\in\mathcal P(y)}
n^{-2t/(2t+K)}
}\\
&=
\sqrt{\phi_n}.
\end{split}
\end{equation}

We next bound the network-complexity term in Theorem~\ref{theo1}.
Under \eqref{eqn:v2_choice1},
$L_i\asymp\log n$ and
\[
\nu_i
\lesssim
\max_{(t,K)\in\mathcal P(Y_i)}
n^{K/[2(2t+K)]},
\]
whereas under \eqref{eqn:v2_choice2},
$\nu_i\asymp1$ and
\[
L_i
\lesssim
\max_{(t,K)\in\mathcal P(Y_i)}
n^{K/[2(2t+K)]}\log n.
\]
In either case,
\[
\log(L_i\nu_i)\lesssim\log n,
\]
and hence
\begin{equation}
\label{eqn:61}
\begin{split}
\max_{i\in I}\{L_i^2\nu_i^2\log(L_i\nu_i)\}
&\lesssim
\max_{i\in I}\max_{(t,K)\in\mathcal P(Y_i)}
n^{K/(2t+K)}(\log n)^3\\
&\leq
\sup_{y\in\mathcal Y}\max_{(t,K)\in\mathcal P(y)}
n^{K/(2t+K)}(\log n)^3\\
&=
n\phi_n(\log n)^3,
\end{split}
\end{equation}
where the final equality follows from
\[
n^{K/(2t+K)}
=
n\,n^{-2t/(2t+K)}.
\]
The ceiling operations affect only multiplicative constants.

 Let
\[
\overline t
:=
\sup_{y\in\mathcal Y}
\max_{(t,K)\in\mathcal P(y)}t
<\infty.
\]
Since $K\geq1$,
\[
\frac{K}{2(2t+K)}
\geq
\frac{1}{2(2\overline t+1)}
=:c>0.
\]
Therefore,
\[
\max_{(t,K)\in\mathcal P(y)}
n^{K/[2(2t+K)]}
\geq n^c
\]
uniformly in $y$. Under \eqref{eqn:v2_choice1}, this implies
$\nu_i\gtrsim n^c$ while $L_i\asymp\log n$; under
\eqref{eqn:v2_choice2}, it implies
$L_i\gtrsim n^c\log n$ while $\nu_i\asymp1$. Hence, under either
architecture,
\[
\min_{i\in I}
\{L_i^2\nu_i^2\log(L_i\nu_i)\}
\gtrsim
n^{2c}\log n.
\]
Since $A_n\geq8$, $\log(2A_n^2)$ is bounded below by a positive
constant, and therefore
\[
\frac{
\min_{i\in I}
\{L_i^2\nu_i^2\log(L_i\nu_i)\}
\log(2A_n^2)
}{
\log n
}
\longrightarrow\infty.
\]
Thus \eqref{eqn:cond1} holds for either architecture.

Finally, substituting \eqref{eqn:60} and \eqref{eqn:61} into
Theorem~\ref{theo1} yields
\[
\begin{split}
\mathcal R(\widehat m)-\mathcal R(m)
\leq r_n+C\Big[&
\sqrt{
|I|\max\{\mathcal U_n^2,A_n^2\}
\phi_n(\log n)^3\log(A_nn)
}\\
&+\sqrt{|I|\phi_n}
\Big]
\end{split}
\]
with probability approaching one. Since $A_n\geq8$ and $n\geq3$, the
second term in brackets is absorbed by the first. This proves
Theorem~\ref{theo2} and its expanded formulation,
Theorem~\ref{v2:theo2}.

\end{proof}

\printbibliography
\end{document}